\newcommand{\ind}{\perp\!\!\!\!\perp}
\theoremstyle{plain}
\newtheorem{theorem}{Theorem}
\theoremstyle{definition}
\newtheorem{definition}[theorem]{Definition}
\theoremstyle{remark}
\icmltitlerunning{Model-agnostic Measure of Generalization Difficulty}
\begin{document}

\twocolumn[
\icmltitle{Model-agnostic Measure of Generalization Difficulty}

% It is OKAY to include author information, even for blind
% submissions: the style file will automatically remove it for you
% unless you've provided the [accepted] option to the icml2023
% package.

% List of affiliations: The first argument should be a (short)
% identifier you will use later to specify author affiliations
% Academic affiliations should list Department, University, City, Region, Country
% Industry affiliations should list Company, City, Region, Country

% You can specify symbols, otherwise they are numbered in order.
% Ideally, you should not use this facility. Affiliations will be numbered
% in order of appearance and this is the preferred way.
\icmlsetsymbol{equal}{*}

\begin{icmlauthorlist}
\icmlauthor{Akhilan Boopathy}{mit}
\icmlauthor{Kevin Liu}{mit}
\icmlauthor{Jaedong Hwang}{mit}
\icmlauthor{Shu Ge}{mit}
\icmlauthor{Asaad Mohammedsaleh}{mit}
\icmlauthor{Ila Fiete}{mit}
%\icmlauthor{}{sch}
%\icmlauthor{}{sch}
\end{icmlauthorlist}

\icmlaffiliation{mit}{Massachusetts Institute of Technology}

\icmlcorrespondingauthor{Akhilan Boopathy}{akhilan@mit.edu}

% You may provide any keywords that you
% find helpful for describing your paper; these are used to populate
% the "keywords" metadata in the PDF but will not be shown in the document
\icmlkeywords{Machine Learning, ICML}

\vskip 0.3in
]

% this must go after the closing bracket ] following \twocolumn[ ...

% This command actually creates the footnote in the first column
% listing the affiliations and the copyright notice.
% The command takes one argument, which is text to display at the start of the footnote.
% The \icmlEqualContribution command is standard text for equal contribution.
% Remove it (just {}) if you do not need this facility.

\printAffiliationsAndNotice{}  % leave blank if no need to mention equal contribution
%\printAffiliationsAndNotice{\icmlEqualContribution} % otherwise use the standard text.

\begin{abstract}
The measure of a machine learning algorithm is the difficulty of the tasks it can perform, and sufficiently difficult tasks are critical drivers of strong machine learning models. However, quantifying the generalization difficulty of machine learning benchmarks has remained challenging. We propose what is to our knowledge the first model-agnostic measure of the inherent generalization difficulty of tasks. Our {\em inductive bias complexity} measure quantifies the total information required to generalize well on a task minus the information provided by the data. It does so by measuring the fractional volume occupied by hypotheses that generalize on a task given that they fit the training data. It scales exponentially with the intrinsic dimensionality of the space over which the model must generalize but only polynomially in resolution per dimension, showing that tasks which require generalizing over many dimensions are drastically more difficult than tasks involving more detail in fewer dimensions. Our measure can be applied to compute and compare supervised learning, reinforcement learning and meta-learning generalization difficulties against each other. We show that applied empirically, it formally quantifies intuitively expected trends, e.g. that in terms of required inductive bias, MNIST $<$ CIFAR10 $<$ Imagenet and fully observable Markov decision processes (MDPs) $<$ partially observable MDPs. Further, we show that classification of complex images $<$ few-shot meta-learning with simple images. Our measure provides a quantitative metric to guide the construction of more complex tasks requiring greater inductive bias, and thereby encourages the development of more sophisticated architectures and learning algorithms with more powerful generalization capabilities.
\end{abstract}

\section{Introduction}
\iffalse
\begin{itemize}
    \item Researchers have proposed many increasingly complex benchmarks to test the generalization capacity of machine learning methods
    \item The difficulty of solving these benchmarks remains difficult to quantify from a theoretical perspective
    \item We develop a novel information-theoretic framework to quantify the information content of inductive biases required to generalize on a task
\end{itemize}
\fi
Researchers have proposed many benchmarks to train machine learning models and test their generalization abilities, from ImageNet~\citep{krizhevsky2012imagenet} for image recognition to Atari games~\citep{bellemare2013ale} for reinforcement learning (RL). More complex benchmarks promote the development of more sophisticated learning algorithms and architectures that can generalize better.

However, we lack rigorous and quantitative measures of the generalization difficulty of these benchmarks. Generalizing on a task requires both \textit{training data} and a model's \textit{inductive biases}, which are \textit{any} constraints on a model class enabling generalization. Inductive biases can be provided by a \textit{model designer}, including the choice of architecture, learning rule or hyperparameters defining a model class. While prior work has quantified the training data needed to generalize on a task (\textit{sample complexity}), analysis of the required inductive biases has been limited. Indeed, while the concept of inductive bias is widely used, it has not been thoroughly and quantitatively defined in general learning settings.

In this paper, we develop a novel information-theoretic framework to measure a task's \textit{inductive bias complexity}, the information content of the inductive biases. Just as sample complexity is a property inherent to a model class (without reference to a specific training set), inductive bias complexity is a property inherent to a training set (without reference to a specific model class). To our knowledge, our measure is the first quantification of inductive bias complexity. As we will describe, our measure quantifies the fraction of the entire hypothesis space that is consistent with inductive biases for a task given that hypotheses interpolate the training data; see Figure~\ref{fig:hypothesis_space} for an illustration and Definition~\ref{inductive_bias_definition} for a formal definition. We use this inductive bias complexity quantification to assess a task's generalization difficulty; we hope our measure can guide the development of tasks requiring greater inductive bias. As one concrete, but widely-applicable, suggestion, we find that adding Gaussian noise to the inputs of a task can dramatically increase its required inductive bias. We summarize our contributions as~\footnote{Our code is released at: \url{https://github.com/FieteLab/inductive-bias-complexity}}:
\begin{itemize}
    \item We provide a formal, information-theoretic definition of \textit{inductive bias complexity} that formalizes intuitive notions of inductive bias.
    \item We develop a novel and, to our knowledge, first quantifiable measure of inductive bias complexity, which we propose using as a measure of the generalization difficulty of a task. This measure also allows us to quantify the relative inductive biases of different model architectures applied to a task.
    \item We propose a practical algorithm to estimate and compare the inductive bias complexities of tasks across the domains of supervised learning, RL and few-shot meta-learning tasks. Empirically, we find that 1) partially observed RL environments require much greater inductive bias compared to fully observed ones and 2) simple few-shot meta-learning tasks can require much greater inductive bias than complex, realistic supervised learning tasks. 
\end{itemize}

\begin{figure*}
    \centering
    \includegraphics[width=\textwidth]{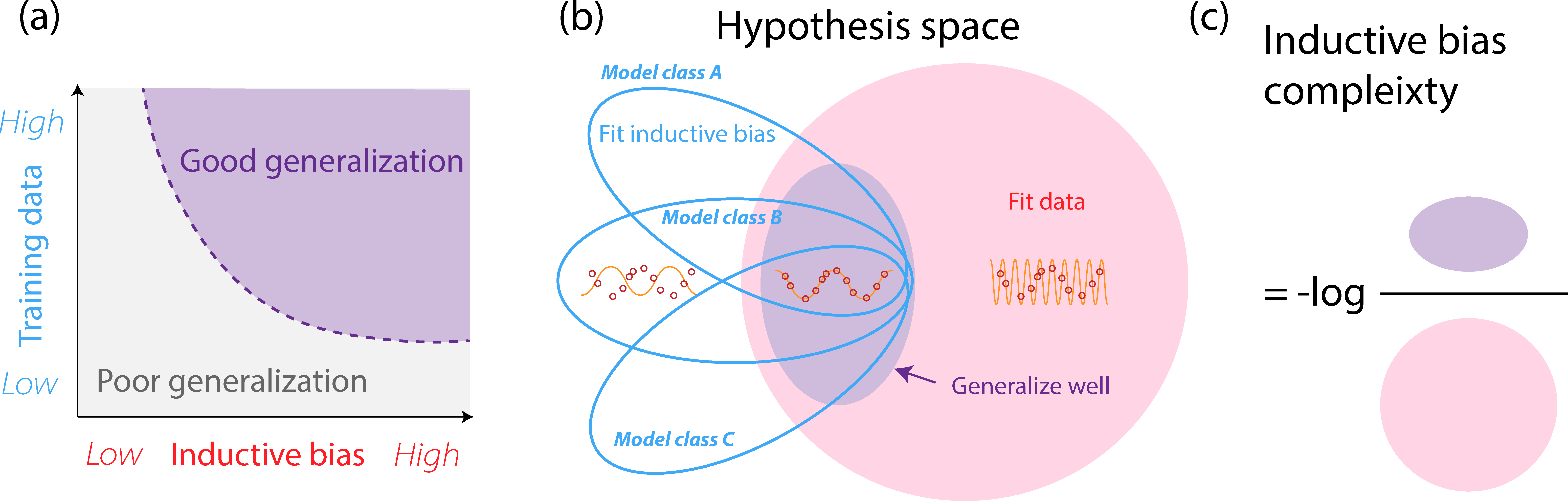}
    \caption{Dividing the hypothesis space of a task. 
    % XXX Also IRF doesn't quite understand the multiple blue circles in part b; maybe in both a, b make the purple region much smaller to reflect the intuition (and later numbers!) that good generalization generally requires very large amounts of inductive bias? 
    (a) the trade-off between the amount of training data required (sample complexity) vs. the amount of inductive bias required (inductive bias complexity) to generalize well. Achieving a lower sample complexity for a fixed generalization performance target requires higher inductive bias complexity to further restrict the hypothesis space. While many papers have quantified sample complexity, we quantify inductive bias complexity. (b) categorization of hypotheses by whether or not they fit the training data (red region) or contain particular sets of inductive biases; different blue ovals correspond to different model classes (i.e. different sets of inductive biases). Together with training data, specific choices of model class can restrict hypotheses to a well-generalizing region; purple indicates well-generalizing hypotheses. Both training data and inductive biases are required to generalize well. In practice, well-generalizing hypotheses occupy a very small fraction of the region of hypothesis space fitting the training data. (c) how inductive bias complexity is computed: we quantify inductive bias complexity as negative log of the fraction of hypotheses that generalize well among the hypotheses that fit the training data.}
    \label{fig:hypothesis_space}
\end{figure*}

\section{Related Work}
\iffalse
\begin{itemize}
    \item Learning curves can describe the data requirements of generalizing on a task, but are typically limited to specific model classes
    \item Sample complexity of regression is controlled by the manifold dimensionality of the input data
    \item Prior information-theoretic measures of task difficulty don't measure difficulty from the perspective of the model designer
\end{itemize}
\fi

\subsection{Model Oriented Generalization Difficulty}
The generalizability of machine learning models is traditionally quantified with \textit{learning curves}, which relate the generalization error on a task to amount of training data (i.e. sample complexity)~\citep{amari1993universal,cortes1994learning,hestness2017deep,murata1992learning}. Generalization error can be bounded by model class capacity measures, including Rademacher complexity~\citep{koltchinskii2000rademacher} and VC dimension~\citep{blumer1989learnability}, with simpler model classes achieving lower generalization error. While valuable, these measures often provide only loose, non-specific generalization bounds on particular tasks because they typically do not leverage a task's structure.

More recently proposed data-dependent generalization bounds yield tighter bounds on generalization error~\citep{jiang2021methods,kawaguchi2022robustness,lei2015multi,negrea2019information,raginsky2016information}. These measures leverage the properties of a dataset to show that particular model classes will generalize well on the dataset. Relatedly, recent work on neural network scaling laws~\citep{bahri2021explaining,hutter2021learning,sharma2022scaling} has modeled learning as kernel regression to show among other results that sample complexity scales \textit{exponentially} with the intrinsic dimensionality of data. Our approach is aligned in the spirit of this line of work; however, instead of leveraging the structure of a task to bound generalization error or sample complexity, we bound the \textit{inductive bias complexity} of a task; see Figure \ref{fig:hypothesis_space}. We believe inductive bias complexity is a more appropriate measure of a task's difficulty compared to sample complexity. Notably, sample complexity assumes a specific model class; it is not model-agnostic. Thus, sample complexity only quantifies the contribution of training data towards solving a task given a particular model class; it does \textit{not} quantify the difficulty of choosing an appropriate model class (e.g. a  model architecture).

\subsection{Information-theoretic Measures of Generalization Difficulty}
% model-general, task-specific, asymptotic(?)
Recent works have analyzed the properties of neural networks using information theory; for instance, information bottleneck can be used to extract well generalizing data representations~\citep{saxe2019information,shamir2008learning,shwartz2017opening}. DIME~\citep{zhang2020dime} uses information theory to estimate task difficulty, bounding the best case error rate achievable by \textit{any} model on a dataset given the underlying data distribution. \citet{achille2021information} defines task difficulty based on the Kolmogorov complexity~\citep{kolmogorov1965three} required to model the relationship between inputs and outputs. Although these measures can be related to generalization, they do not directly quantify the information needed to generalize. Alternatively, generalization difficulty can be expressed as the amount of information required to perform a task in addition to any training data provided to a learning system~\citep{chollet2019arc}. This aligns with other literature assessing the \textit{prior knowledge} contained in a learning system whether through specific abilities~\citep{hernandez2016evaluation}, biases~\citep{haussler1988quantifiying}, or model architectures~\citep{du2018samples, li2021convolutional}. In this work, we take a similar approach, defining generalization difficulty as the amount of information in inductive biases required on top of training data in order to solve a task within the hypothesis space.

\section{Measuring Information Content of Inductive Biases} \label{method}
\iffalse
    Task difficulty can be computed as the information content of inductive biases required to solve a task
    \begin{itemize}
        \item Claim 3: Task difficulty increases with the distance between training and test sets
        \begin{itemize}
            \item Derivation of exact bound on task difficulty
        \end{itemize}
        \item Claim 4: Task difficulty grows exponentially with intrinsic dimension, and polynomially with task resolution
        \begin{itemize}
            \item Empirical task difficulty approximation based on series expansions on the input manifold
        \end{itemize}
        \item Claim 5: Task difficulty can be practically approximated for real world datasets using simple statistics
        \begin{itemize}
            \item Algorithm to compute task difficulty
        \end{itemize}
    \end{itemize}
\fi

\begin{table}
    \centering
    \caption{Table of symbols.}
    \adjustbox{width=0.49\textwidth}{
    \begin{tabular}{c|l}
        Symbol & Meaning \\ \hline
        $f(\cdot;\theta)$ & Function from the hypothesis space, parameterized by $\theta$ \\
        $p$ & Input distribution, denotes the test distribution \\
        $q$ & Input distribution, denotes the training distribution \\
        $e(\theta,p)$ & Error of $f(\cdot;\theta)$ on input distribution $p$ \\
        $\mathcal{L}$ & Loss function \\
        $\varepsilon$ & Desired test error \\
        $\tilde{I}$ & Inductive bias complexity \\
        $\Theta_q$ & Set of hypotheses that fit the training dataset \\
        $L_\mathcal{L}$ & Lipschitz constant for $\mathcal{L}$ \\
        $L_f$ & Constant used in second-order condition on $f$ \\
        $m$ & Intrinsic dimensionality of the data \\
        $k$ & Number of submanifolds \\
        $r$ & Radius of the sphere on which datapoints lie\\
        $d$ & Dimensionality of the model output $f(x;\theta)$ \\
        $f_i$ & A single component of $f$ \\
        $b$ & Upper bound on $\|f(x;\theta)\|$ \\
        $n$ & Size of training dataset \\
        $M$ & Maximum frequency of eigenfunctions considered \\
        $\delta$ & Spatial resolution of $f$; equal to $2 \pi r / M$ \\
        $K$ & Maximum integer satisfying $\sqrt{K(K+m-1)}\le M$ \\
        $E$ & $\#$ of eigenfunctions with frequency $\leq M$; $O(M^m)$
    \end{tabular}
    }
    \label{tab:symbols}
\end{table}

It is well known by the No Free Lunch theorem~\citep{wolpert1996lack} that no learning algorithm can perform well on \textit{any} task; only learning algorithms that possess appropriate inductive biases relevant to a particular task(s) can be expected to generalize well given a finite amount of data. Thus, generalizing on a task requires both training data and inductive biases (i.e. a choice of model class) (see Figure~\ref{fig:hypothesis_space} (b)). In Section~\ref{inductive_bias_definition}, we formally define \textit{inductive bias complexity} as the amount of inductive bias required to generalize. In Section~\ref{exact_bound}, we exactly bound this quantity in terms of the desired error rate of a task and distance between training and test distributions. In Section~\ref{approx_bound}, we further approximate the bound to produce a practically computable estimate. Table~\ref{tab:symbols} summarizes the notation used throughout this section.

\subsection{Formally Defining Inductive Bias Complexity} \label{inductive_bias_definition}
We formally relate inductive biases and generalization using the concept of a \textit{hypothesis space}. Hypotheses may or may not satisfy a training set or different sets of inductive biases (see Figure~\ref{fig:hypothesis_space} (b)); different sets of inductive biases (e.g. different model architectures) correspond to different subsets of the hypothesis space. Generalization fundamentally requires both inductive biases and training data to specify a set of well-generalizing hypotheses. Note that there is a trade-off between the amount of training data and inductive bias required to generalize (see Figure~\ref{fig:hypothesis_space} (a)). We aim to quantify the minimum level of inductive bias required for generalization given a fixed amount of training data. We call this measure the \textit{inductive bias complexity} (see Figure~\ref{fig:hypothesis_space} (c)). Just as sample complexity quantifies the amount of training data required to generalize given a fixed inductive bias (i.e. a fixed model class), inductive bias complexity measures the amount of inductive bias required given a fixed amount of training data.

We begin by defining a (very broad and general) hypothesis space, consisting of functions $f(x;\theta)$ parameterized by a vector random variable $\theta$. The hypothesis space is \textit{not} the same as the model class (e.g. neural networks) used to solve a task. Instead, the hypothesis space is ideally a massive space that encompasses all possible model classes that might reasonably be used to solve a task. We view the hypothesis space constructed here simply as a mathematical tool to analyze the properties of the task itself rather than relating to the actual models trained on a task. We assume that the true (target) function to be learned is expressible within this large hypothesis space, and we define it to be given by  $f^*(x)=f(x;\theta^*)$. The generalization error of a specific $\theta$ and input distribution $p$ is denoted $e(\theta, p) = \hat{e}(f(\cdot;\theta), p)$. We aim to find a hypothesis $\theta$ achieving generalization error below a threshold $\varepsilon$: $e(\theta, p) \leq \varepsilon$. Although this notation resembles supervised learning, our framework incorporates many different learning scenarios including unsupervised, reinforcement and meta-learning as described in Appendix~\ref{framework_additional_settings}.

We use a Bayesian perspective to quantify the amount of \textit{information} needed to specify a well-generalizing set of hypotheses; this will be our measure of inductive bias complexity. Specifically, we assume hypotheses $\theta$ are random variables, and we assume a uniform prior probability distribution over hypotheses in the hypothesis space (which can generally be made to hold by reparameterizing). Next, we observe a set of training data that is consistent with only a set of hypotheses in the hypothesis space. Thus, the training data induces a new posterior distribution over the hypotheses. Finally, inductive bias complexity will be related to the probability that a hypothesis drawn from this posterior distribution is in the well-generalizing set of hypotheses: if generalization has a low probability under the posterior, greater inductive bias is required. Thus, our definition of inductive bias complexity can be viewed as the additional \textit{information} required to specify the well-generalizing hypotheses \textit{beyond any task-relevant information provided by training data.}

Formally, we define inductive bias complexity $\tilde{I}$ as follows:
\begin{definition} \label{eqn:inductive_bias_definition}
    Suppose a probability distribution is defined over a set of hypotheses, and let random variable $\theta$ correspond to a sample from this distribution. Given an error threshold $\epsilon$ which models are optimized to reach on the training set, the inductive bias complexity required to achieve error rate $\varepsilon$ is on a task 
  \begin{equation}
    \tilde{I} = -\log \mathbb{P}(e(\theta,p)\le\varepsilon\mid e(\theta,q) \le \epsilon)
    \end{equation}
\end{definition}
Observe that this is exactly the information content of the probabilistic event that a hypothesis that fits the training set \text{also} generalizes well. This formalizes the intuitive notion that inductive bias corresponds to the difficulty of finding well-generalizing models among the ones that fit the training data. For ease of analysis, we define "fitting" the training set as perfectly interpolating it (setting $\epsilon = 0$). A well-generalizing hypothesis, achieving a low but non-zero error rate, might \textit{not} interpolate the training set. Nevertheless, the regime of perfect interpolation can be relevant for training overparameterized deep networks; we leave other regimes as future work and briefly outline how to extend to these regimes in Appendix~\ref{sec:more_discussion}.

Importantly, our definition of inductive bias complexity \textit{does not specify a specific form of inductive bias}. Multiple sets of inductive biases (e.g. different model architectures) may yield the same level of generalization as suggested by Figure~\ref{fig:hypothesis_space}(b). Our measure quantifies the amount of information that \textit{any} set of inductive biases must provide to generalize. Moreover, our definition of inductive bias complexity is distinct from the \textit{expressivity of a model class} (i.e. the volume of hypotheses spanned by a model class). Inductive bias complexity measures how constrained a model class must be \textit{as relevant to generalizing on a particular task}; by contrast, expressivity is a measure specific to a model class that is not necessarily task-dependent. Thus, a more expressive (i.e. less constrained) model class may provide more or less inductive bias on a particular task depending on the relevance of the increased expressivity to the task. For instance, we find experimentally in Section~\ref{sec:results} that certain vision transformer architectures outperform convolutional neural networks (CNNs) on ImageNet classification (and thus more inductive bias) despite transformers arguably being a less constrained model class in the sense that their spatial invariance constraints may be weaker than for CNNs.

\subsection{An Exact Upper Bound on Inductive Bias Complexity} \label{exact_bound}
We are able to provide an exact upper bound on the inductive bias complexity given above, under some technical assumptions. These assumptions may not hold in all settings; nevertheless, we believe they hold in many common settings (e.g. when the loss function is a square loss and the model $f$ is twice differentiable with bounded inputs and parameters). Specifically, we assume that the input, parameter, and output spaces are equipped with standard distance metrics $d(\cdot,\cdot)$ and norms $\|\cdot\|$. We assume that the loss function is shift-invariant in the sense that $\mathcal{L}(y+f^*(x_2)-f^*(x_1), x_2) = \mathcal{L}(y, x_1)$ for all $y, x_1, x_2$; this assumption is satisfied by a squared loss function since $||y + f^*(x_2) - f^*(x_1) - f^*(x_2)||_2^2 = ||y-f^*(x_1)||_2^2$. We also assume a Lipschitz constant $L_\mathcal{L}$ on the loss function $\mathcal{L}$ which corresponds to the sensitivity of the loss function with respect to perturbations in the predicted output; we expect this sensitivity to be small for most realistic problems. Finally, we assume second-order condition on the model $f$: $\|f(x_1;\theta_1)-f(x_1;\theta_2)-f(x_2;\theta_1)+f(x_2;\theta_2)\| \le L_fd(x_1,x_2)d(\theta_1,\theta_2)$. $L_f$ intuitively corresponds to the joint sensitivity of the model with respect to input $x$ and hypothesis parameterization $\theta$. If slightly perturbing the model does not significantly affect the local mapping between the input and output, then $L_f$ will be small. We also expect this to be a reasonable assumption for many problems, although there may be models (such as those with discontinuities) for which this assumption does not hold. Appendix~\ref{sec:assumptions} further discusses the validity of these assumptions. 

Finally, let $p$ denote the test distribution, $q$ denote the training distribution, and $W(\cdot,\cdot)$ denote the 1-Wasserstein distance. Additionally define $\Theta_q = \{\theta \mid e(\theta,q)=0\}$
to be the set of hypotheses that perfectly interpolate the training dataset.

\begin{theorem} \label{theorem:exactbound}
Suppose that the loss function satisfies $\mathcal{L}(y, x)\ge 0$, and equality holds if and only if $y=f^*(x)$. Suppose that the loss function is invariant to shifts in $x$ in the sense that $\mathcal{L}(y+f^*(x_2)-f^*(x_1), x_2) = \mathcal{L}(y, x_1)$ for all $y, x_1, x_2$. Suppose $\mathcal{L}$ has a Lipschitz constant $L_\mathcal{L}$:
\begin{equation}
\mathcal{L}(y+\delta, x) - \mathcal{L}(y,x) \le L_\mathcal{L}\|\delta \| 
\end{equation}
for all $y, \delta, x$. We also assume a second-order condition on $f$ for some constant $L_f$:
\begin{multline} \label{eqn:second_order}
\|f(x_1;\theta_1)-f(x_1;\theta_2)-f(x_2;\theta_1)+f(x_2;\theta_2)\| \\ \le L_fd(x_1,x_2)d(\theta_1,\theta_2)
\end{multline}
for all $x_1,x_2,\theta_1,\theta_2$. Then we have the following upper bound on the inductive bias complexity:
\begin{equation} \label{eqn:exact_bound}
\tilde{I} \le -\log \mathbb{P}\left(d(\theta,\theta^*)\le\frac{\varepsilon}{L_\mathcal{L}L_fW(p,q)} \mid \theta\in\Theta_q\right),
\end{equation}
\end{theorem}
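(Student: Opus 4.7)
The plan is to lower-bound the conditional probability $\mathbb{P}(e(\theta,p)\le\varepsilon \mid \theta\in\Theta_q)$ by showing an event inclusion: any $\theta\in\Theta_q$ sufficiently close to $\theta^*$ in parameter distance automatically satisfies $e(\theta,p)\le\varepsilon$. Concretely, I will argue that whenever $\theta\in\Theta_q$ and $d(\theta,\theta^*)\le\varepsilon/(L_\mathcal{L}L_fW(p,q))$, the test error $e(\theta,p)$ is at most $\varepsilon$; then the bound on $\tilde{I}$ follows immediately from $A\subseteq B\Rightarrow \mathbb{P}(A)\le\mathbb{P}(B)$ applied to the numerators of the conditional probabilities (their common denominator $\mathbb{P}(\theta\in\Theta_q)$ cancels).

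The first step is to observe that $\theta\in\Theta_q$ together with $\mathcal{L}\ge 0$ and the zero-iff-target condition forces $f(x_1;\theta)=f^*(x_1)$ for $q$-almost every $x_1$, so $\mathcal{L}(f(x_1;\theta),x_1)=0$ almost surely under $q$. Next, pick an optimal coupling $\pi$ between $p$ and $q$ realizing $W(p,q)=\mathbb{E}_{(x_1,x_2)\sim\pi}d(x_1,x_2)$. Writing $e(\theta,p)=\mathbb{E}_\pi[\mathcal{L}(f(x_2;\theta),x_2)]$ and using the shift-invariance identity $\mathcal{L}(f(x_1;\theta)+f^*(x_2)-f^*(x_1),x_2)=\mathcal{L}(f(x_1;\theta),x_1)=0$, I can insert this zero and apply the Lipschitz assumption with $\delta=f(x_2;\theta)-f(x_1;\theta)-f^*(x_2)+f^*(x_1)$ to get the pointwise bound
\begin{equation*}
\mathcal{L}(f(x_2;\theta),x_2)\le L_\mathcal{L}\,\bigl\|f(x_2;\theta)-f(x_1;\theta)-f(x_2;\theta^*)+f(x_1;\theta^*)\bigr\|,
\end{equation*}
where I used $f^*(\cdot)=f(\cdot;\theta^*)$.

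Now the second-order condition bounds the right-hand side by $L_\mathcal{L}L_f d(x_1,x_2)d(\theta,\theta^*)$. Taking expectation under $\pi$ and pulling $d(\theta,\theta^*)$ outside yields
\begin{equation*}
e(\theta,p)\le L_\mathcal{L}L_f\,d(\theta,\theta^*)\,W(p,q),
\end{equation*}
so the hypothesized bound on $d(\theta,\theta^*)$ gives $e(\theta,p)\le\varepsilon$. This is the desired event inclusion, and the theorem follows.

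The main obstacle is making the shift-invariance manipulation clean: one has to correctly identify the reference point $x_1$ at which $f(\cdot;\theta)$ equals $f^*(\cdot)$ (an event holding only $q$-a.s.) and then use shift invariance to transport that zero to the test point $x_2$, so that the Lipschitz step can absorb the remaining discrepancy. Once this bookkeeping is settled, the second-order condition plus Wasserstein duality (via the coupling) deliver the product structure $d(\theta,\theta^*)\cdot W(p,q)$ in a single line of expectation. The rest — converting an almost-sure implication between events into an inequality of conditional probabilities and then into an inequality of negative logarithms — is formal.
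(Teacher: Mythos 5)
Your proposal is correct and follows essentially the same route as the paper's proof: shift-invariance transports the (a.s.-vanishing) training loss from $x_1\sim q$ to the test point $x_2\sim p$, the Lipschitz condition absorbs the residual $f(x_2;\theta)-f(x_1;\theta)-f(x_2;\theta^*)+f(x_1;\theta^*)$, the second-order condition yields the product $d(x_1,x_2)\,d(\theta,\theta^*)$, and integrating over a coupling produces $W(p,q)$. The only cosmetic differences are that you invoke interpolation pointwise at the outset rather than bounding $e(\theta,p)-e(\theta,q)$ first and setting $e(\theta,q)=0$ at the end, and that you fix an optimal coupling where the paper takes an infimum over all couplings; neither changes the argument.
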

The bound depends on the following key quantities: desired error rate $\varepsilon$, interpolating hypothesis space $\Theta_q$ and  1-Wasserstein distance between the training and test distributions $W(p, q)$.
See Appendix~\ref{sec:proof} for a proof. Observe that the proof is quite general: it does not rely on any specific assumptions on the hypothesis space, nor does it assume any relationship between the test and training distributions.

\subsection{Empirical Inductive Bias Complexity Approximation} \label{approx_bound}
We now apply a series of approximations on the general bound of Theorem \ref{theorem:exactbound} to achieve a practically computable estimate. We present this approximation in the setting of supervised classification, but note that it can be applied to other settings as we show in our experiments (see Appendix~\ref{sec:empirical} for full details on our empirical computation). As we propose the first measure of inductive bias complexity, our focus is on capturing general trends (scalings) rather than producing precise estimates. We hope future work may further refine these estimates or further relax our assumptions.

\paragraph{Supervised Classification Setting} We assume data points $x$ lie on $d$ non-overlapping $m$-spheres of radius $r$, where each sphere corresponds to a class. Furthermore, we assume that the test distribution $p$ is uniform over all the $m$-spheres and the training distribution $q$ is the empirical distribution of $n$ points drawn i.i.d. from $p$. This approximation captures key characteristics of data distributions in actual classification tasks: 1) bounded domain, 2) similar density over all points, 3) non-overlapping distributions for different classes. We assume the dimensionality of $f(x;\theta)$ is $d$, the number of classes. Let $b$ be a constant such that $\|f(x;\theta)\|\le b$ for all $x,\theta$. Let $\delta$ be the ``spatial resolution'' of $f$, defined such that perturbing the input $x$ by amounts less than $\delta$ does not significantly change the output $f(x;\theta)$ (i.e. $\delta$ is the minimum change that $f$ is sensitive to). In our case of supervised classification, $\delta$ corresponds to the distance between classes.

\paragraph{Parametrization of Hypotheses}
\label{parameterizing_hypotheses}
In order to compute Equation~\eqref{eqn:exact_bound}, we must make a specific choice of parameterization for $f(x;\theta)$. Recall from Section~\ref{inductive_bias_definition} that we wish to select a broad and general hypothesis space encompassing all model classes practically applicable to a task. Hypotheses that we may \textit{not} want to consider include functions which are sensitive to changes on $x$ below the task relevant resolution $\theta$. Thus, we consider only bandlimited functions with frequencies less than $M = 2 \pi r / \delta$. Importantly, this choice of hypothesis space parameterization is \textit{not} related to the model classes used to actually train on a task; instead, it depends only on the properties of the task.

More formally, to construct a basis of functions below a certain frequency on a sphere, we use the approach in \citet{mcrae2020sample}, which parameterises functions as a linear combination of eigenfunctions of the \textit{Laplace-Beltrami operator} on the manifold. The Laplace-Beltrami operator is a generalization of the Laplacian to general manifolds; it takes a function defined on a manifold and outputs a measure of the function's curvature at each point on the manifold. Eigenfunctions of this operator are analogous to Fourier basis functions in Euclidean space, with the corresponding eigenvalues corresponding to the eigenfunction's local curvature (analogous to frequency). With the appropriate choice of frequency cutoff $M$, we may form a very large and general hypothesis space that simultaneously encompasses deep neural networks among other commonly-used model classes, while being finite-dimensional and excluding any with components that are unreasonably curved (i.e. with too high-frequency components). See Figure~\ref{fig:constructing_hypotheses} for a visualization in the Euclidean case. We also highlight that a different choice of basis may lead to a hypothesis space with different properties: our choice explicitly excludes all high-frequency functions, but an alternative choice of basis may include functions with high-frequency components. Moreover, even with our choice of basis, the choice of frequency cutoff $M$ has a critical effect on the size of the hypothesis space and resulting inductive bias complexity (see Appendix~\ref{app:vary_res}).

Recall that we aim to construct a set of basis functions mapping from $d$ $m$-spheres to $\mathbb{R}^d$. We will use a separate set of basis functions for each $m$-sphere and each dimension of $f$. For each nonnegative integer $k$, the Laplace-Beltrami operator on an $m$-sphere has $\binom{m+k}{m}-\binom{m+k-2}{m}$ eigenfunctions with frequency $\sqrt{k(k+m-1)}$ \citep{dai2013approximation}. Taking $K$ to be the maximum integer $k$ such that $\sqrt{k(k+m-1)}\le M$, there are
\begin{multline}
E = \sum_{k=0}^K \binom{m+k}{m}-\binom{m+k-2}{m} \\ = \binom{m+K-1}{m} + \binom{m+K}{m}
\end{multline}
eigenfunctions with frequency at most $M$. Note that for large $M$, $E$ scales as $O(M^m)$ since $K$ scales as $M$. Since each coefficient has a real and imaginary part, each $1$-dimensional component of the $d$-dimensional function $f$ on \textit{each of the $d$ $m$-spheres} can be parameterized by $2E$ basis functions. The total number of basis functions parameterizing the full $d$-dimensional function for a single $m$-sphere is $2dE$. Over all $d$ $m$-spheres, the total number of basis functions is $2d^2E$: this is the dimensionality of $\theta$.

Each training point constrains the dimensionality of the possible values of $\theta$ by $d$ because given any hypothesis, to fit any new training point with output dimensionality $d$, in general only $d$ dimensions of the hypothesis need to be modified. In other words, being able to independently control $d$ dimensions of variation in the hypothesis space is sufficient to control the $d$ dimensional output of the hypothesis on a new point. This can be understood concretely in a linear setting: suppose hypotheses $f(x; \theta) \in \mathbb{R}^d$ are constructed as $f(x; \theta) = A \theta$ for some full-rank matrix $A$ with dimensionality $d \times 2 d^2 E$. Then, in order to fit a point $(x, y)$, $\theta$ must satisfy $y = A \theta$, which constrains $\theta$ to a $2 d^2 E - d$ dimensional space (specifically, $\theta$ is allowed to move in the null space of $A$). Therefore, the dimensionality of $\Theta_q$ is $2d^2 E-nd$. 

Next, we obtain a bound on $\|\theta\|$. If the parameters of a component $f_k$ are $\theta_k$, Parseval's identity \citep{stein2011parseval} states that $\|\theta_k\|^2$ equals the average value of $\|f_k(x;\theta_k)\|^2$, which is bounded by $b^2$. Since there are $d^2$ components, we have $\|\theta\| \le b d$.

Next, to approximate the volume of $\Theta_q$, we must make an assumption on the shape of $\Theta_q$. Given that we have a bound on $\theta$, we simply approximate $\Theta_q$ as a disk of dimensionality $2d^2 E-nd$ and radius $bd$; note that the disk is embedded in a $2d^2 E$ dimensional space. This disk approximation retains key properties of interpolating manifolds (low-dimensionality and boundedness) while being analytically tractable; see Appendix~\ref{sec:hypothesis_shape} for further discussion.

The set $\left\{\theta\in\Theta_q\mid d(\theta,\theta^*)\le\frac{\varepsilon}{L_\mathcal{L}L_fW(p,q)}\right\}$ is a disc with radius $\frac{\varepsilon}{L_\mathcal{L}L_fW(p,q)}$. Therefore,
\begin{multline}\label{eq:volume}
\mathbb{P}\left(d(\theta,\theta^*)\le\frac{\varepsilon}{L_\mathcal{L}L_fW(p,q)} \mid \theta\in\Theta_q\right) \\ \approx \left(\frac{\frac{\varepsilon}{L_\mathcal{L}L_fW(p,q)}}{bd}\right)^{2d^2E-nd}.
\end{multline}
To arrive at this approximation, we set the hypothesis space as a linear combination of eigenfunctions of the Laplace-Beltrami operator on the manifold, and then approximated volumes in the hypothesis space as balls. Since we do not make specific assumptions on the manifold of $x$ to construct our hypothesis space, we expect the overall scaling of our result to hold for general tasks.

\paragraph{Approximating Wasserstein Distance} We obtain a simple scaling result for the Wasserstein distance $W(p,q)$, the expected transport distance in the optimal transport between $p$ and $q$. In the optimal transport plan, each training point is associated with a local region of the sphere on which it lies; the plan moves probability density between each point and the local region around each point. These regions must be equally sized, disjoint, and tile the entire region of all the spheres. Although the regions' shapes depend on the specific distribution $q$, if $q$ is near uniform, we expect these regions to be similarly shaped.

We approximate these regions as hypercubes because they are the simplest shape capable of tiling $m$-dimensional space. In order to fill the full area $d \cdot\frac{2\pi^{(m+1)/2}}{\Gamma\left(\frac{m+1}{2}\right)}r^m$ of all spheres, the $n$ hypercubes must have side length
\begin{equation}
s = \left(\frac{d}{n}\cdot\frac{2\pi^{(m+1)/2}}{\Gamma\left(\frac{m+1}{2}\right)}r^m\right)^{1/m} = r n^{-1/m} \left(\frac{2\pi^{(m+1)/2} d}{\Gamma\left(\frac{m+1}{2}\right)}\right)^{1/m}.
\end{equation}
 The expected distance between two randomly chosen points in a hypercube is at most $s\sqrt{m/6}$ \citep{anderssen1976cube}. We use this distance as an approximation of the optimal transport distance between each point and its associated local hypercube. The final $W(p, q)$ can then simply be approximated as the optimal transport distance corresponding to each point:
\begin{equation}
W(p,q) \approx s\sqrt{\frac{m}{6}} = r n^{-1/m} \sqrt{\frac{m}{6}} \left(\frac{2\pi^{(m+1)/2} d}{\Gamma\left(\frac{m+1}{2}\right)}\right)^{1/m}
\end{equation}
Note that this expression scales as $O(r n^{-1/m})$. This scaling is intuitively reasonable: the distance between training and test distributions scales linearly with the scale of the data manifold $r$. Moreover, given $n$ randomly sampled points on an $m$-dimensional manifold, we may expect that roughly $n \rho^m$ of the volume of the manifold is within radius $\rho$ of at least one of the points; this is consistent with the $n^{-1/m}$ scaling. Indeed, \citet{singh2018minimax} finds that this scaling applies generally, regardless of manifold structure.

In Appendix~\ref{sec:wasserstein}, we empirically compute Wasserstein distances on a sphere and find that the distances follow this approximation. Although this particular approximation is valid when the empirical training distribution $q$ is drawn i.i.d. from a uniform underlying data distribution $p$, our framework can be applied to non-i.i.d. settings by simply modifying our Wasserstein distance approximation to reflect the train-test distributional shift.

\paragraph{Final Approximation} Since $f$ is a linear combination of basis functions with coefficients $\theta$, $L_f$ upper bounds the norm of the gradient of the basis functions. A basis function with a single output and frequency $\sqrt{k(k+m-1)}$ has a gradient with magnitude at most $k/r$. Since $f$'s output has dimensionality $d$, we can take $L_f=K\sqrt{d}/r$. Substituting the result of Equation~\eqref{eq:volume} along with our approximation for $W(p,q)$ into Equation~\eqref{eqn:exact_bound} gives an approximated inductive bias complexity of
\begin{multline} \label{final}
\tilde{I} \approx (2d^2E-nd) \\ \left(\log b + \frac{3}{2}\log d + \log K - \frac{1}{m}\log n - \log\frac{\varepsilon}{L_\mathcal{L}} + \log c\right).
\end{multline}
where $c=\sqrt{\frac{m}{6}}\left(\frac{2\pi^{(m+1)/2} d}{\Gamma\left(\frac{m+1}{2}\right)}\right)^{1/m}$, which is roughly constant for large intrinsic dimensionality $m$. Recall that $d$ is the number of classes, $n$ is the number of training data points, $\varepsilon$ is the target error, $L_\mathcal{L}$ is the Lipschitz constant of the loss function, $b$ is a bound on the model output, and $K$ and $E$ are functions of $m$ and the maximum frequency $M=2\pi r/\delta$.

\paragraph{Properties of Inductive Bias Complexity} The dominant term in our expression, Equation~\eqref{final}, for inductive bias complexity is $2d^2E-nd$. Since $E \in O(M^m)$, inductive bias complexity is \textit{exponential} in the data's intrinsic dimensionality $m$. Intuitively, this is because the dimensionality of the hypothesis space scales exponentially with intrinsic dimensionality: each additional dimension of variation in the input requires hypotheses to express a new set of output values for \textit{each} possible value in the new input dimension. Because inductive bias complexity is the difficulty of specifying a well-generalizing hypothesis, it scales with the dimensionality of the hypothesis space. This exponential dependence empirically yields large scales for our inductive bias complexity measure, with the scale largely set by the input dimensionality. See Appendix~\ref{sec:more_discussion} for further intuition and discussion.

Since $M=2\pi r/\delta$, for a fixed $m$, inductive bias complexity is polynomial in $1/\delta$. Intuitively, this is because changing the maximum frequency allowed in hypotheses merely linearly scales the number of hypotheses along each dimension of the hypothesis space, yielding an overall polynomial effect. Thus, $\delta$ impacts inductive bias complexity less than $m$, but can still be significant for large $m$. Inductive bias complexity increases quadratically with $d$ due to its quadratic effect on the dimensionality of the hypothesis space, so this has less impact than $m$ and $\delta$. Inductive bias complexity decreases roughly linearly with the number of training points $n$; intuitively, this is because each training point provides a roughly equal amount of information. Finally, inductive bias complexity increases logarithmically as the desired error $\varepsilon$ decreases; this is because $\varepsilon$ corresponds to a radius in the hypothesis space, and inductive bias is defined based on log of a volume in the hypothesis space. Intuitively, the inductive bias complexity \textit{must} depend on the desired performance level: achieving the performance of a random classifier on a classification task is trivial, for example.

\section{Empirical Generalization Difficulty of Benchmarks}  \label{sec:results}
Here, we use our quantification of inductive bias complexity as a measure of the \textit{generalization difficulty} of a task (or \textit{task difficulty}). Inductive bias complexity corresponds to the effort that a model designer must apply to generalize on a task, while sample complexity corresponds to the contribution of the training data; thus, inductive bias complexity is an appropriate measure of the generalization difficulty of a task from a model designer's perspective. We empirically validate our measure on several supervised classification, meta-learning, and RL tasks. We also quantify the relative inductive biases of different model architectures on different tasks and empirically show task difficulty trends over parametric task variations.

Computing the difficulty $\tilde{I}$ of Equation~\eqref{final} for specific tasks specifying several parameters. The target generalization error $\varepsilon$ is fixed at a level based on the type of the task (see Appendix~\ref{sec:empirical} Table~\ref{tab:desired_performance}). We estimate the data dimensionality $m$ by relating the decay rate of nearest neighbor distances in the training set to intrinsic dimensionality~\citep{pope2021intrinsic}. The required detail (resolution) per dimension $\delta$ is set to the inter-class margin for classification tasks and a scale at which state perturbations do not significantly affect trajectories for RL tasks.
%Practically computing the task difficulty $\tilde{I}$ in Equation~\ref{final} requires estimating a number of quantities including the dimensionality of the data points $m$, the desired spatial resolution $\delta$ and the target error $\varepsilon$. 
See Appendix~\ref{sec:empirical} for full details.

\subsection{Task Difficulty of Standard Benchmarks}
\paragraph{Image classification} We estimate the task difficulty of the commonly used image classification benchmarks MNIST, SVHN \citep{netzer2011svhn}, CIFAR10 \citep{krizhevsky2009cifar}, and ImageNet~\citep{deng2009imagenet}. As shown in Figure~\ref{fig:benchmarks}, models solving these tasks encode large amounts of inductive bias, with more complex tasks requiring models with more inductive bias.

\paragraph{Few-shot learning}
\begin{figure}
    \centering
    \includegraphics[width=\linewidth]{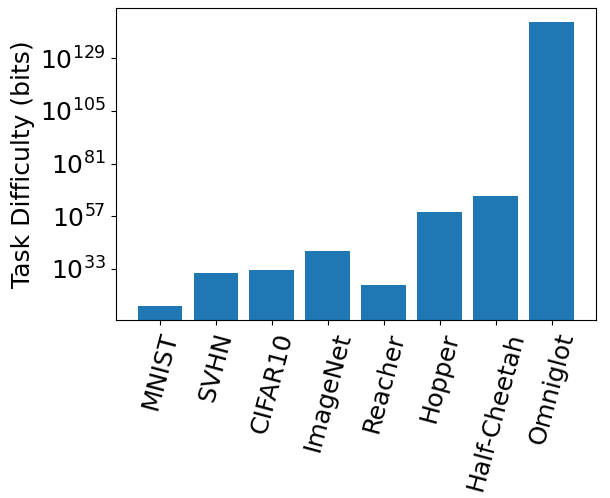}
    \caption{Task difficulties of various benchmark tasks across domains.}\label{fig:benchmarks}
\end{figure}
We estimate the task difficulty of 1-shot, 20-way classification on Omniglot~\citep{lake2015HumanlevelCL}. The model must learn a function $f$ mapping a \textit{dataset} of 20 images from an alphabet to a \textit{classifier} $g$, which itself maps an image to a probability distribution over 20 classes. Both the inputs and outputs of $f$ have much larger dimensionality than for typical image classification. This yields very large task difficulties: Omniglot has task difficulty $\approx 10^{145}$ bits (vs. $10^{41}$ for ImageNet); see Figure~\ref{fig:benchmarks}. % Exact number is 6.53114990078546e+144

\paragraph{RL}
We find the task difficulty of the Reacher, Hopper, and Half-Cheetah control tasks in MuJoCo~\citep{todorov2012mujoco, duan2016benchmarking}. Figure~\ref{fig:benchmarks} shows that more complex environments with higher dimensional observation and action spaces have higher task difficulties as intuitively expected. Moreover, the range of task difficulties is comparable with image classification: task difficulties are comparable across domains.

\subsection{Inductive Bias Contributed by Different Models to Various Tasks}
Although our task difficulty measure is model-agnostic, we can use it to compare how much inductive bias different models contribute to various tasks. 
To do this, we find the test set performance of a trained model from the model class (e.g. test set accuracy for classification tasks). We then use this performance level to compute $\varepsilon$ in the task difficulty expression in Equation~\eqref{final} (importantly, we do not \textit{directly} use the test set data). This method is motivated by the idea that a model class enables generalization by providing an inductive bias, and its generalization ability can be quantified using test set performance. Thus, we quantify a model class's inductive bias as the minimum inductive bias required to achieve the performance of the model class: greater performance requires greater inductive bias. As observed in Table~\ref{tab:arch} (see Table~\ref{tab:arch_full} for full results), the increase in information content resulting from using a better model architecture (such as switching from ResNets \citep{he2016resnet} to Vision Transformers \citep{dosovitskiy2021image}) is generally larger for more complex tasks. In other words, more complex tasks {\em extract} more of the inductive bias information present in a model.
\begin{table}[!h]
    \centering
    \caption{The inductive bias information contributed by different model architectures for image classification tasks. 
    %(FC-$N$ refers to a fully connected network with $N$ layers.)
    }
    \adjustbox{width=0.49\textwidth}{
    \centering
    \begin{tabular}
    {p{0.3\textwidth}|c}
        \toprule
        \multicolumn{2}{c}{\textbf{CIFAR10}} \\ \hline
        Model & Information Content ({\small$\times 10^{32}$ bits})\\ \hhline{=|=}
        Linear \citep{nishimoto2018linear} & $2.250$ \\ \hline
        AlexNet \citep{krizhevsky2012imagenet} & $2.709$ \\ \hline
        ResNet-50 \citep{wightman2021timm} & $3.195$ \\ \hline
        BiT-L \citep{kolesnikov2020bit} & $3.453$ \\ \hline
        ViT-H/14 \citep{dosovitskiy2021image} & $3.513$ \\ \bottomrule
    \end{tabular}
    }
    \adjustbox{width=0.49\textwidth}{
    \centering
    \begin{tabular}
    {p{0.3\textwidth}|c}
        \toprule
        \multicolumn{2}{c}{\textbf{ImageNet}} \\ \hline
        Model & Information Content ({\small$\times 10^{41}$ bits}) \\ \hhline{=|=}
        Linear \citep{karpathy2015breaking} & $2.063$\\ \hline
        AlexNet \citep{krizhevsky2012imagenet} & $2.290$ \\ \hline
        ResNet-50 \citep{he2016resnet} & $2.362$ \\ \hline
        BiT-L \citep{kolesnikov2020bit} & $2.449$ \\ \hline
        ViT-H/14 \citep{dosovitskiy2021image} & $2.461$ \\ \hline
    \end{tabular}
    }
    \newline
    \label{tab:arch}
\end{table}

\begin{figure*}[t!]
    \centering
    \begin{subfigure}{.245\textwidth}
      \centering
      \includegraphics[width=\linewidth]{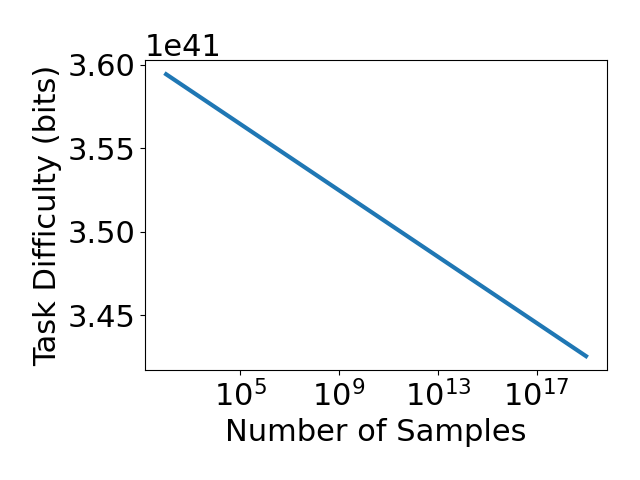}
      \caption{Varying training data \\ on ImageNet}
      \label{fig:vary_train_data}
    \end{subfigure}%
    \begin{subfigure}{.245\textwidth}
      \centering
      \includegraphics[width=\linewidth]{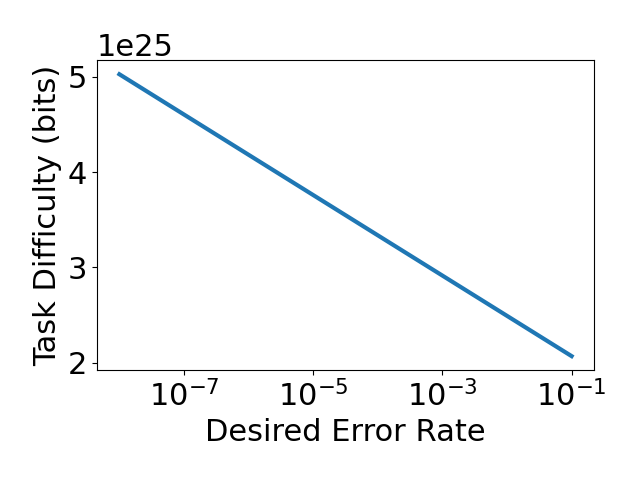}
      \caption{Varying desired error \\ on Reacher}
      \label{fig:vary_error_rate}
    \end{subfigure} 
    \begin{subfigure}{.245\textwidth}
      \centering
      \includegraphics[width=\linewidth]{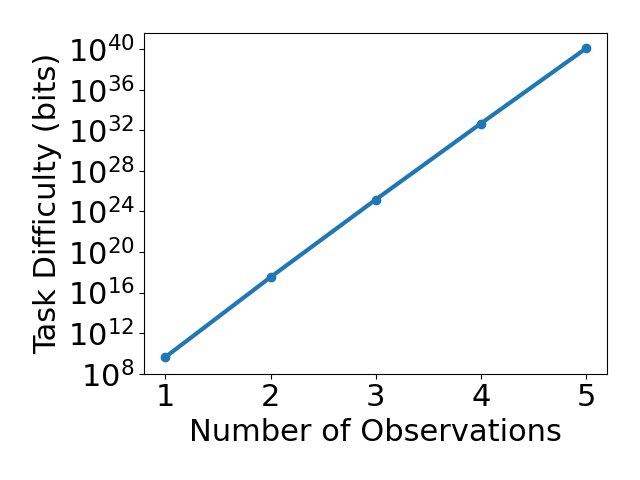}
      \caption{Varying intrinsic dim \\ on Cartpole}
      \label{fig:vary_intrinsic_dim_rl}
    \end{subfigure}
    \begin{subfigure}{.245\textwidth}
      \centering
      \includegraphics[width=\linewidth]{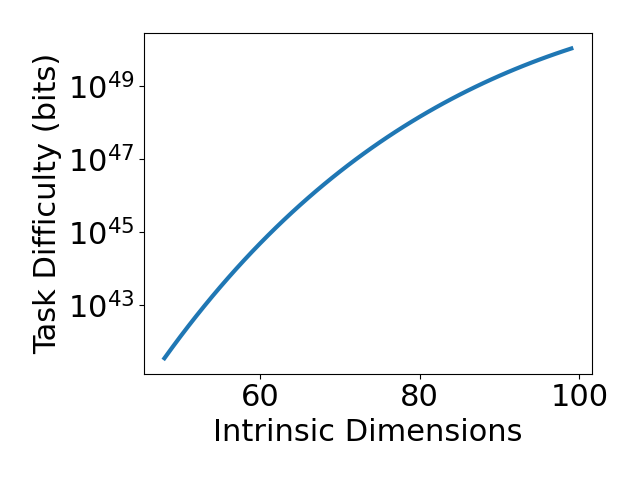}
      \caption{Varying intrinsic dim \\ on ImageNet}
      \label{fig:vary_intrinsic_dim_imagenet}
    \end{subfigure}
    \caption{Task difficulties on variations of benchmark tasks.
    }
    \label{fig:vary_params}
\end{figure*}
\subsection{Trends of Task Difficulty}
In this section, we consider how task difficulty changes as we parametrically vary aspects of a task. Please see Appendix~\ref{app:task_vary} for additional task variations.

\paragraph{Varying training data}
We vary the amount of training data provided in ImageNet and find in Figure~\ref{fig:vary_train_data} that the task difficulty decreases with more training points. Observe that even when training data is increased by many orders of magnitude, the scale of task difficulty remains large, suggesting that training data provides relatively little information to solve a task relative to inductive biases. Intuitively, this is because in the absence of strong inductive biases, training data only provides information about the approximated function in a \textit{local} neighborhood around each training point; inductive biases, by contrast, can provide \textit{global} constraints on the function. See Appendix~\ref{sec:more_discussion} for further discussion.

\paragraph{Varying desired error rate}
In Figure~\ref{fig:vary_error_rate}, we vary the desired error on the Reacher task and find that task difficulty decreases with desired error rate as expected. As with varying training data, the scale of task difficulty remains large even when dramatically reducing the desired error rate.

\paragraph{Varying intrinsic dimension}
We consider a noisy version of Cartpole~\citep{florian2007correct}, an inverted pendulum control task where agents apply binary forces to control a pole. In noisy Cartpole, the agent observes a noisy version of the pole's angular position and velocity, requiring possibly multiple observations to find the optimal action. The agent must learn a function with a higher-dimensional input, making the task more difficult. As Figure~\ref{fig:vary_intrinsic_dim_rl} shows, task difficulty is exponential in the number of observations needed to find the optimal action. We also increase the intrinsic dimensionality of ImageNet classification by adding perturbations to the images outside the original image manifold; creating classifiers robust to such perturbations is known theoretically and empirically to be difficult~\citep{madry2017towards, yin2019rademacher}. Figure~\ref{fig:vary_intrinsic_dim_imagenet} shows that task difficulty drastically increases with added intrinsic dimensions. Thus, increasing the intrinsic dimension of a task is a powerful way of increasing its difficulty.

\section{Discussion}

\iffalse
\begin{itemize}
    \item Inductive information content is very large
    \item Large numbers come from large reductions in the hypothesis space from compositionality, smoothness etc. Future work may start with a hypothesis space with some of these constraints already built in
    \item Our measure reveals that tasks with higher intrinsic dimensionality require far greater inductive bias (partially observed RL > fully observed RL, meta-learning > supervised learning)
    \item Our measure suggests the construction of tasks with many degrees of variation in each instance, but not necessary high resolution in each dimension
    \item Our work relies on a number of approximations designed to only capture general trends of task difficulty; future work may produce more precise estimates of task difficulty
    \item We hope our measure can guide the development of tasks that require well-generalizing architectures with many built-in inductive biases
\end{itemize}
\fi
For all tasks, inductive bias complexities are much larger than practical model sizes. This is due to the volume of the hypothesis space: to be as model-free as possible, we constructed a vast space of all functions restricted only by a bandwidth constraint on frequency content Section~\ref{approx_bound}. Well-generalizing functions occupy a small volume fraction within the space, requiring massive amounts of information in the inductive bias. See Appendix~\ref{sec:more_discussion} for further discussion. A future direction is to obtain task (inductive bias) complexity measures for specific model classes (e.g. deep neural networks).

Our measure provides a quantitative guide to designing hard benchmarks. It reveals that tasks with inputs on higher dimensional manifolds require far more inductive bias relative to other variations. This yields counter-intuitive findings: meta-learning on simple Omniglot handwritten characters requires far more inductive bias than than classification of complex, realistic ImageNet images. Thus, prioritizing generalization over many degrees of variation is more important than increasing the richness of the task within each dimension of variation -- even if these benchmarks, like Omniglot, may appear simple. Examples of such benchmarks include few-shot learning, adversarially robust image classification, and MDPs with very limited observations.  As a specific, but broadly-applicable example of increasing the number of dimensions of variation, we suggest adding Gaussian noise to task inputs as in Figure~\ref{fig:vary_intrinsic_dim_imagenet}.

We also emphasize that inductive bias complexity is not intended to encompass \textit{all} aspects of a task that may be difficult. For instance, in online RL, our definition of inductive bias complexity does not capture the policy dependence of the training data; thus, we cannot quantify the exploration-exploitation trade-off. However, our framework captures exactly the \textit{generalization-relevant} aspects of a task: for example, in RL, generalization can be viewed in much the same way as supervised learning, except with different policies inducing different distributions during training and testing~\citep{kakade2003sample}. Importantly, in our analysis, we do not require that training and testing must be done on the same distribution (see Section~\ref{exact_bound}); thus, we believe our measure applies to quantifying generalization in situations like online RL and beyond where the training and test distributions may be quite different.

To our knowledge, we provide the first measure of inductive bias complexity; thus, it is not designed to be quantitatively precise. We hope future work may be able to further refine our estimates. Nevertheless, we hope our measure may encourage the development of tasks requiring well-generalizing architectures and learning rules with many built-in inductive biases.

\section*{Acknowledgements}
Ila Fiete is supported by the Office of Naval Research, the Howard Hughes Medical Institute~(HHMI), and NIH (NIMH-MH129046).

\bibliography{example_paper}

\begin{thebibliography}{70}
\providecommand{\natexlab}[1]{#1}
\providecommand{\url}[1]{\texttt{#1}}
\expandafter\ifx\csname urlstyle\endcsname\relax
  \providecommand{\doi}[1]{doi: #1}\else
  \providecommand{\doi}{doi: \begingroup \urlstyle{rm}\Url}\fi

\bibitem[Achille et~al.(2021)Achille, Paolini, Mbeng, and
  Soatto]{achille2021information}
Achille, A., Paolini, G., Mbeng, G., and Soatto, S.
\newblock The information complexity of learning tasks, their structure and
  their distance.
\newblock \emph{Information and Inference: A Journal of the IMA}, 10\penalty0
  (1):\penalty0 51--72, 2021.

\bibitem[Amari(1993)]{amari1993universal}
Amari, S.
\newblock A universal theorem on learning curves.
\newblock \emph{Neural Networks}, 6\penalty0 (2):\penalty0 161--166, 1993.
\newblock ISSN 0893-6080.
\newblock \doi{https://doi.org/10.1016/0893-6080(93)90013-M}.

\bibitem[An et~al.(2020)An, Lee, Park, Yang, and So]{an2020ensemble}
An, S., Lee, M., Park, S., Yang, H., and So, J.
\newblock An ensemble of simple convolutional neural network models for mnist
  digit recognition.
\newblock \emph{arXiv preprint arXiv:2008.10400}, 2020.

\bibitem[Anderssen et~al.(1976)Anderssen, Brent, Daley, and
  Moran]{anderssen1976cube}
Anderssen, R.~S., Brent, R.~P., Daley, D.~J., and Moran, P. A.~P.
\newblock Concerning $\int_0^1 \cdots \int_0^1 {(x_1^2 + \cdots + x_k^2 )} ^{{1
  / 2}} dx_1 \cdots ,dx_k $ and a {T}aylor series method.
\newblock \emph{SIAM Journal on Applied Mathematics}, 30\penalty0 (1):\penalty0
  22--30, 1976.
\newblock \doi{10.1137/0130003}.

\bibitem[Bahri et~al.(2021)Bahri, Dyer, Kaplan, Lee, and
  Sharma]{bahri2021explaining}
Bahri, Y., Dyer, E., Kaplan, J., Lee, J., and Sharma, U.
\newblock Explaining neural scaling laws.
\newblock \emph{arXiv preprint arXiv:2102.06701}, 2021.

\bibitem[Bellemare et~al.(2013)Bellemare, Naddaf, Veness, and
  Bowling]{bellemare2013ale}
Bellemare, M.~G., Naddaf, Y., Veness, J., and Bowling, M.
\newblock The arcade learning environment: An evaluation platform for general
  agents.
\newblock \emph{JAIR}, 47:\penalty0 253–279, Jun 2013.
\newblock \doi{10.1613/jair.3912}.

\bibitem[Blumer et~al.(1989)Blumer, Ehrenfeucht, Haussler, and
  Warmuth]{blumer1989learnability}
Blumer, A., Ehrenfeucht, A., Haussler, D., and Warmuth, M.~K.
\newblock Learnability and the vapnik-chervonenkis dimension.
\newblock \emph{J. ACM}, 36\penalty0 (4):\penalty0 929–965, oct 1989.
\newblock \doi{10.1145/76359.76371}.

\bibitem[Chollet(2019)]{chollet2019arc}
Chollet, F.
\newblock On the measure of intelligence.
\newblock \emph{arXiv preprint}, 2019.

\bibitem[Ciregan et~al.(2012)Ciregan, Meier, and Schmidhuber]{ciregan2012multi}
Ciregan, D., Meier, U., and Schmidhuber, J.
\newblock Multi-column deep neural networks for image classification.
\newblock In \emph{CVPR}, 2012.

\bibitem[Cortes et~al.(1994)Cortes, Jackel, Solla, Vapnik, and
  Denker]{cortes1994learning}
Cortes, C., Jackel, L.~D., Solla, S., Vapnik, V., and Denker, J.
\newblock Learning curves: Asymptotic values and rate of convergence.
\newblock In \emph{NeurIPS}, 1994.

\bibitem[Dai \& Xu(2013)Dai and Xu]{dai2013approximation}
Dai, F. and Xu, Y.
\newblock \emph{Approximation theory and harmonic analysis on spheres and
  balls}, volume~23.
\newblock Springer, 2013.

\bibitem[Dekkers et~al.(1989)Dekkers, Einmahl, and Haan]{dekkers1989extreme}
Dekkers, A.~L.~M., Einmahl, J.~H.~J., and Haan, L.~D.
\newblock A moment estimator for the index of an extreme-value distribution.
\newblock \emph{The Annals of Statistics}, 1989.

\bibitem[Deng et~al.(2009)Deng, Dong, Socher, Li, Li, and
  Fei-Fei]{deng2009imagenet}
Deng, J., Dong, W., Socher, R., Li, L.-J., Li, K., and Fei-Fei, L.
\newblock Image{N}et: A large-scale hierarchical image database.
\newblock In \emph{CVPR}, 2009.

\bibitem[Dosovitskiy et~al.(2021)Dosovitskiy, Beyer, Kolesnikov, Weissenborn,
  Zhai, Unterthiner, Dehghani, Minderer, Heigold, Gelly,
  et~al.]{dosovitskiy2021image}
Dosovitskiy, A., Beyer, L., Kolesnikov, A., Weissenborn, D., Zhai, X.,
  Unterthiner, T., Dehghani, M., Minderer, M., Heigold, G., Gelly, S., et~al.
\newblock An image is worth 16x16 words: Transformers for image recognition at
  scale.
\newblock \emph{ICLR}, 2021.

\bibitem[Du et~al.(2018)Du, Wang, Zhai, Balakrishnan, Salakhutdinov, and
  Singh]{du2018samples}
Du, S.~S., Wang, Y., Zhai, X., Balakrishnan, S., Salakhutdinov, R., and Singh,
  A.
\newblock How many samples are needed to estimate a convolutional or recurrent
  neural network?
\newblock \emph{NeurIPS}, 2018.

\bibitem[Duan et~al.(2016)Duan, Chen, Houthooft, Schulman, and
  Abbeel]{duan2016benchmarking}
Duan, Y., Chen, X., Houthooft, R., Schulman, J., and Abbeel, P.
\newblock Benchmarking deep reinforcement learning for continuous control.
\newblock In \emph{ICML}, 2016.

\bibitem[Florian(2007)]{florian2007correct}
Florian, R.~V.
\newblock Correct equations for the dynamics of the cart-pole system.
\newblock \emph{Center for Cognitive and Neural Studies (Coneural), Romania},
  2007.

\bibitem[Foret et~al.(2020)Foret, Kleiner, Mobahi, and
  Neyshabur]{foret2020sharpness}
Foret, P., Kleiner, A., Mobahi, H., and Neyshabur, B.
\newblock Sharpness-aware minimization for efficiently improving
  generalization.
\newblock \emph{arXiv preprint arXiv:2010.01412}, 2020.

\bibitem[Goodfellow et~al.(2013)Goodfellow, Bulatov, Ibarz, Arnoud, and
  Shet]{goodfellow2013multi}
Goodfellow, I.~J., Bulatov, Y., Ibarz, J., Arnoud, S., and Shet, V.
\newblock Multi-digit number recognition from street view imagery using deep
  convolutional neural networks.
\newblock \emph{arXiv preprint arXiv:1312.6082}, 2013.

\bibitem[Haussler(1988)]{haussler1988quantifiying}
Haussler, D.
\newblock Quantifying inductive bias: Ai learning algorithms and valiant's
  learning framework.
\newblock \emph{Artificial Intelligence}, 36\penalty0 (2):\penalty0 177--221,
  1988.

\bibitem[He et~al.(2016)He, Zhang, Ren, and Sun]{he2016resnet}
He, K., Zhang, X., Ren, S., and Sun, J.
\newblock Deep residual learning for image recognition.
\newblock In \emph{CVPR}, 2016.

\bibitem[Hernández-Orallo(2016)]{hernandez2016evaluation}
Hernández-Orallo, J.
\newblock Evaluation in artificial intelligence: from task-oriented to
  ability-oriented measurement.
\newblock \emph{Artificial Intelligence Review}, Aug 2016.

\bibitem[Hestness et~al.(2017)Hestness, Narang, Ardalani, Diamos, Jun,
  Kianinejad, Patwary, Yang, and Zhou]{hestness2017deep}
Hestness, J., Narang, S., Ardalani, N., Diamos, G., Jun, H., Kianinejad, H.,
  Patwary, M. M.~A., Yang, Y., and Zhou, Y.
\newblock Deep learning scaling is predictable, empirically.
\newblock \emph{arXiv preprint}, 2017.

\bibitem[Huang et~al.(2017)Huang, Liu, Van Der~Maaten, and
  Weinberger]{huang2017densely}
Huang, G., Liu, Z., Van Der~Maaten, L., and Weinberger, K.~Q.
\newblock Densely connected convolutional networks.
\newblock In \emph{CVPR}, 2017.

\bibitem[Hutter(2021)]{hutter2021learning}
Hutter, M.
\newblock Learning curve theory.
\newblock \emph{arXiv preprint arXiv:2102.04074}, 2021.

\bibitem[Jiang et~al.(2021)Jiang, Natekar, Sharma, Aithal, Kashyap,
  Subramanyam, Lassance, Roy, Dziugaite, Gunasekar, et~al.]{jiang2021methods}
Jiang, Y., Natekar, P., Sharma, M., Aithal, S.~K., Kashyap, D., Subramanyam,
  N., Lassance, C., Roy, D.~M., Dziugaite, G.~K., Gunasekar, S., et~al.
\newblock Methods and analysis of the first competition in predicting
  generalization of deep learning.
\newblock In \emph{NeurIPS 2020 Competition and Demonstration Track}, 2021.

\bibitem[Kakade(2003)]{kakade2003sample}
Kakade, S.~M.
\newblock \emph{On the sample complexity of reinforcement learning}.
\newblock University of London, University College London (United Kingdom),
  2003.

\bibitem[Karpathy(2015)]{karpathy2015breaking}
Karpathy, A.
\newblock Breaking linear classifiers on imagenet.
\newblock \emph{Andrej Karpathy blog}, 2015.

\bibitem[Kawaguchi et~al.(2022)Kawaguchi, Deng, Luh, and
  Huang]{kawaguchi2022robustness}
Kawaguchi, K., Deng, Z., Luh, K., and Huang, J.
\newblock Robustness implies generalization via data-dependent generalization
  bounds.
\newblock In \emph{ICML}, 2022.

\bibitem[Keskar et~al.(2017)Keskar, Mudigere, Nocedal, Smelyanskiy, and
  Tang]{keskar2017large}
Keskar, N.~S., Mudigere, D., Nocedal, J., Smelyanskiy, M., and Tang, P. T.~P.
\newblock On large-batch training for deep learning: Generalization gap and
  sharp minima.
\newblock \emph{ICLR}, 2017.

\bibitem[Kolesnikov et~al.(2020)Kolesnikov, Beyer, Zhai, Puigcerver, Yung,
  Gelly, and Houlsby]{kolesnikov2020bit}
Kolesnikov, A., Beyer, L., Zhai, X., Puigcerver, J., Yung, J., Gelly, S., and
  Houlsby, N.
\newblock Big transfer ({B}i{T}): General visual representation learning.
\newblock In \emph{ECCV}, 2020.

\bibitem[Kolmogorov(1965)]{kolmogorov1965three}
Kolmogorov, A.~N.
\newblock Three approaches to the quantitative definition ofinformation’.
\newblock \emph{Problems of information transmission}, 1965.

\bibitem[Koltchinskii \& Panchenko(2000)Koltchinskii and
  Panchenko]{koltchinskii2000rademacher}
Koltchinskii, V. and Panchenko, D.
\newblock Rademacher processes and bounding the risk of function learning.
\newblock \emph{High Dimensional Probability II}, pp.\  443–457, 2000.
\newblock \doi{10.1007/978-1-4612-1358-1_29}.

\bibitem[Krizhevsky et~al.(2009)Krizhevsky, Nair, and
  Hinton]{krizhevsky2009cifar}
Krizhevsky, A., Nair, V., and Hinton, G.
\newblock Cifar-10 and cifar-100 datasets.
\newblock 2009.
\newblock URL \url{https://www. cs. toronto. edu/kriz/cifar. html}.

\bibitem[Krizhevsky et~al.(2012)Krizhevsky, Sutskever, and
  Hinton]{krizhevsky2012imagenet}
Krizhevsky, A., Sutskever, I., and Hinton, G.~E.
\newblock Imagenet classification with deep convolutional neural networks.
\newblock In \emph{NeurIPS}, 2012.

\bibitem[Lake et~al.(2015)Lake, Salakhutdinov, and
  Tenenbaum]{lake2015HumanlevelCL}
Lake, B., Salakhutdinov, R., and Tenenbaum, J.
\newblock Human-level concept learning through probabilistic program induction.
\newblock \emph{Science}, 350:\penalty0 1332 -- 1338, 2015.

\bibitem[Lecun et~al.(1998)Lecun, Bottou, Bengio, and Haffner]{lecun1998mnist}
Lecun, Y., Bottou, L., Bengio, Y., and Haffner, P.
\newblock Gradient-based learning applied to document recognition.
\newblock \emph{Proceedings of the IEEE}, 86\penalty0 (11):\penalty0
  2278--2324, 1998.
\newblock \doi{10.1109/5.726791}.

\bibitem[Lee et~al.(2015)Lee, Xie, Gallagher, Zhang, and Tu]{lee2015deeply}
Lee, C.-Y., Xie, S., Gallagher, P., Zhang, Z., and Tu, Z.
\newblock Deeply-supervised nets.
\newblock In \emph{AISTATS}, 2015.

\bibitem[Lei et~al.(2015)Lei, Dogan, Binder, and Kloft]{lei2015multi}
Lei, Y., Dogan, U., Binder, A., and Kloft, M.
\newblock Multi-class svms: From tighter data-dependent generalization bounds
  to novel algorithms.
\newblock \emph{NeurIPS}, 2015.

\bibitem[Li et~al.(2018)Li, Xu, Taylor, Studer, and
  Goldstein]{li2018visualizing}
Li, H., Xu, Z., Taylor, G., Studer, C., and Goldstein, T.
\newblock Visualizing the loss landscape of neural nets.
\newblock \emph{NeurIPS}, 2018.

\bibitem[Li et~al.(2021)Li, Zhang, and Arora]{li2021convolutional}
Li, Z., Zhang, Y., and Arora, S.
\newblock Why are convolutional nets more sample-efficient than fully-connected
  nets?
\newblock \emph{ICLR}, 2021.

\bibitem[Lin et~al.(2015)Lin, Memisevic, and Konda]{lin2015far}
Lin, Z., Memisevic, R., and Konda, K.
\newblock How far can we go without convolution: Improving fully-connected
  networks.
\newblock \emph{arXiv preprint}, 2015.

\bibitem[Madry et~al.(2017)Madry, Makelov, Schmidt, Tsipras, and
  Vladu]{madry2017towards}
Madry, A., Makelov, A., Schmidt, L., Tsipras, D., and Vladu, A.
\newblock Towards deep learning models resistant to adversarial attacks.
\newblock \emph{arXiv preprint arXiv:1706.06083}, 2017.

\bibitem[Mauch \& Yang(2017)Mauch and Yang]{mauch2017prune}
Mauch, L. and Yang, B.
\newblock A novel layerwise pruning method for model reduction of fully
  connected deep neural networks.
\newblock In \emph{ICASSP}, 2017.

\bibitem[McRae et~al.(2020)McRae, Romberg, and Davenport]{mcrae2020sample}
McRae, A., Romberg, J., and Davenport, M.
\newblock Sample complexity and effective dimension for regression on
  manifolds.
\newblock \emph{NeurIPS}, 2020.

\bibitem[Mhaskar et~al.(2017)Mhaskar, Liao, and Poggio]{mhaskar2017when}
Mhaskar, H., Liao, Q., and Poggio, T.
\newblock When and why are deep networks better than shallow ones?
\newblock In \emph{AAAI}, 2017.

\bibitem[mrgrhn(2021)]{mrgrhn2021alexnet}
mrgrhn.
\newblock Alexnet with tensorflow, 2021.
\newblock URL
  \url{https://medium.com/swlh/alexnet-with-tensorflow-46f366559ce8}.

\bibitem[Murata et~al.(1992)Murata, Yoshizawa, and Amari]{murata1992learning}
Murata, N., Yoshizawa, S., and Amari, S.-i.
\newblock Learning curves, model selection and complexity of neural networks.
\newblock In \emph{NeurIPS}, 1992.

\bibitem[Negrea et~al.(2019)Negrea, Haghifam, Dziugaite, Khisti, and
  Roy]{negrea2019information}
Negrea, J., Haghifam, M., Dziugaite, G.~K., Khisti, A., and Roy, D.~M.
\newblock Information-theoretic generalization bounds for sgld via
  data-dependent estimates.
\newblock \emph{NeurIPS}, 2019.

\bibitem[Netzer et~al.(2011)Netzer, Wang, Coates, Bissacco, Wu, and
  Ng]{netzer2011svhn}
Netzer, Y., Wang, T., Coates, A., Bissacco, A., Wu, B., and Ng, A.
\newblock Reading digits in natural images with unsupervised feature learning.
\newblock \emph{NeurIPS}, 2011.

\bibitem[Nishimoto(2018)]{nishimoto2018linear}
Nishimoto, B.~E.
\newblock Linear classifier, 2018.
\newblock URL \url{https://github.com/brunonishimoto/linear-classifier}.

\bibitem[Poggio et~al.(2017)Poggio, Kawaguchi, Liao, Miranda, Rosasco, Boix,
  Hidary, and Mhaskar]{poggio2017theory}
Poggio, T., Kawaguchi, K., Liao, Q., Miranda, B., Rosasco, L., Boix, X.,
  Hidary, J., and Mhaskar, H.
\newblock Theory of deep learning iii: explaining the non-overfitting puzzle.
\newblock \emph{arXiv preprint arXiv:1801.00173}, 2017.

\bibitem[Pope et~al.(2021)Pope, Zhu, Abdelkader, Goldblum, and
  Goldstein]{pope2021intrinsic}
Pope, P., Zhu, C., Abdelkader, A., Goldblum, M., and Goldstein, T.
\newblock The intrinsic dimension of images and its impact on learning.
\newblock \emph{ICLR}, 2021.

\bibitem[Raginsky et~al.(2016)Raginsky, Rakhlin, Tsao, Wu, and
  Xu]{raginsky2016information}
Raginsky, M., Rakhlin, A., Tsao, M., Wu, Y., and Xu, A.
\newblock Information-theoretic analysis of stability and bias of learning
  algorithms.
\newblock In \emph{IEEE Information Theory Workshop (ITW)}, 2016.

\bibitem[S{\'a}nchez \& Perronnin(2011)S{\'a}nchez and
  Perronnin]{sanchez2011high}
S{\'a}nchez, J. and Perronnin, F.
\newblock High-dimensional signature compression for large-scale image
  classification.
\newblock In \emph{CVPR}, 2011.

\bibitem[Saxe et~al.(2019)Saxe, Bansal, Dapello, Advani, Kolchinsky, Tracey,
  and Cox]{saxe2019information}
Saxe, A.~M., Bansal, Y., Dapello, J., Advani, M., Kolchinsky, A., Tracey,
  B.~D., and Cox, D.~D.
\newblock On the information bottleneck theory of deep learning.
\newblock \emph{Journal of Statistical Mechanics: Theory and Experiment},
  2019\penalty0 (12):\penalty0 124020, 2019.

\bibitem[Shamir et~al.(2008)Shamir, Sabato, and Tishby]{shamir2008learning}
Shamir, O., Sabato, S., and Tishby, N.
\newblock Learning and generalization with the information bottleneck.
\newblock In \emph{ALT}, 2008.

\bibitem[Sharma \& Kaplan(2022)Sharma and Kaplan]{sharma2022scaling}
Sharma, U. and Kaplan, J.
\newblock Scaling laws from the data manifold dimension.
\newblock \emph{JMLR}, 23:\penalty0 9--1, 2022.

\bibitem[Shwartz-Ziv \& Tishby(2017)Shwartz-Ziv and Tishby]{shwartz2017opening}
Shwartz-Ziv, R. and Tishby, N.
\newblock Opening the black box of deep neural networks via information.
\newblock \emph{arXiv preprint arXiv:1703.00810}, 2017.

\bibitem[Singh \& P{\'o}czos(2018)Singh and P{\'o}czos]{singh2018minimax}
Singh, S. and P{\'o}czos, B.
\newblock Minimax distribution estimation in wasserstein distance.
\newblock \emph{arXiv preprint}, 2018.

\bibitem[Stein \& Shakarchi(2011)Stein and Shakarchi]{stein2011parseval}
Stein, E. and Shakarchi, R.
\newblock \emph{Fourier Analysis: An Introduction}.
\newblock Princeton lectures in analysis. Princeton University Press, 2011.
\newblock ISBN 9781400831234.

\bibitem[Todorov et~al.(2012)Todorov, Erez, and Tassa]{todorov2012mujoco}
Todorov, E., Erez, T., and Tassa, Y.
\newblock Mujoco: A physics engine for model-based control.
\newblock In \emph{IROS}, 2012.

\bibitem[Treves(2016)]{treves2016topological}
Treves, F.
\newblock \emph{Topological Vector Spaces, Distributions and Kernels: Pure and
  Applied Mathematics, Vol. 25}, volume~25.
\newblock Elsevier, 2016.

\bibitem[Veeramacheneni et~al.(2022)Veeramacheneni, Wolter, Klein, and
  Garcke]{veeramacheneni2022canonical}
Veeramacheneni, L., Wolter, M., Klein, R., and Garcke, J.
\newblock Canonical convolutional neural networks.
\newblock \emph{arXiv preprint arXiv:2206.01509}, 2022.

\bibitem[Wightman et~al.(2021)Wightman, Touvron, and
  J{\'e}gou]{wightman2021timm}
Wightman, R., Touvron, H., and J{\'e}gou, H.
\newblock Resnet strikes back: An improved training procedure in timm.
\newblock \emph{arXiv preprint arXiv:2110.00476}, 2021.

\bibitem[Wolpert(1996)]{wolpert1996lack}
Wolpert, D.~H.
\newblock The lack of a priori distinctions between learning algorithms.
\newblock \emph{Neural computation}, 8\penalty0 (7):\penalty0 1341--1390, 1996.

\bibitem[Yin et~al.(2019)Yin, Kannan, and Bartlett]{yin2019rademacher}
Yin, D., Kannan, R., and Bartlett, P.
\newblock Rademacher complexity for adversarially robust generalization.
\newblock In \emph{ICML}, 2019.

\bibitem[Yu et~al.(2019)Yu, Quillen, He, Julian, Narayan, Shively, Bellathur,
  Hausman, Finn, and Levine]{yu2019metaworld}
Yu, T., Quillen, D., He, Z., Julian, R., Narayan, A., Shively, H., Bellathur,
  A., Hausman, K., Finn, C., and Levine, S.
\newblock Meta-world: A benchmark and evaluation for multi-task and meta
  reinforcement learning.
\newblock \emph{CoRL}, 2019.

\bibitem[Zagoruyko \& Komodakis(2016)Zagoruyko and
  Komodakis]{zagoruyko2016wide}
Zagoruyko, S. and Komodakis, N.
\newblock Wide residual networks.
\newblock \emph{arXiv preprint arXiv:1605.07146}, 2016.

\bibitem[Zhang et~al.(2020)Zhang, Wang, Naik, Xiong, and Socher]{zhang2020dime}
Zhang, P., Wang, H., Naik, N., Xiong, C., and Socher, R.
\newblock {DIME}: An information-theoretic difficulty measure for {AI}
  datasets.
\newblock \emph{NeurIPS Workshop DL-IG}, 2020.

\end{thebibliography}
\bibliographystyle{icml2023}

%%%%%%%%%%%%%%%%%%%%%%%%%%%%%%%%%%%%%%%%%%%%%%%%%%%%%%%%%%%%%%%%%%%%%%%%%%%%%%%
%%%%%%%%%%%%%%%%%%%%%%%%%%%%%%%%%%%%%%%%%%%%%%%%%%%%%%%%%%%%%%%%%%%%%%%%%%%%%%%
% APPENDIX
%%%%%%%%%%%%%%%%%%%%%%%%%%%%%%%%%%%%%%%%%%%%%%%%%%%%%%%%%%%%%%%%%%%%%%%%%%%%%%%
%%%%%%%%%%%%%%%%%%%%%%%%%%%%%%%%%%%%%%%%%%%%%%%%%%%%%%%%%%%%%%%%%%%%%%%%%%%%%%%
\newpage
\appendix
\onecolumn

\section{Mapping Between Our Framework and Common Learning Settings} \label{framework_additional_settings}
In this section, we provide detailed descriptions of the mapping of various learning settings to our framework. We first introduce a unified set of notation we use to describe all these settings, and then individually discuss each setting.

We define a \textit{task} as consisting of \textit{instances} $x$ and \textit{actions} $y$; a learning system aims to learn a mapping from instances to actions. We assume that each instance $x$ is described by a set of \textit{features}: $x$ has $K_c$ continuous features $c_1, c_2, ... c_{K_c}$ and $K_d$ discrete features $d_1, d_2, ... d_{K_d}$. Finally, we assume that the features uniquely describe all possible points $x$: there exists an invertible function $A$ such that:
\begin{equation}
    x = A(c_1, c_2, ... c_{K_c}, d_1, d_2, ... d_{K_d})
\end{equation}

for all possible values of $x$ and the features on their respective manifolds. Note that such a decomposition of $x$ always exists by using a single feature $c_1=x$ or $d_1=x$; tasks with a more complex structure may have instances with a further decomposition reflecting the task structure.

We assume that for each instance $x$, there is a unique desired action $y \in \mathcal{Y}$ among the set of possible actions $\mathcal{Y}$; the optimal mapping is denoted by $y = f^*(x)$. In general, learning the exact mapping $f^*$ may be intractable. Instead, it may be feasible to closely match the function $f^*$ within some error. Given a loss function $\mathcal{L}$, we define the error of a function $f$ under a particular distribution $p$ over $x$ as:
\begin{equation}
    \hat{e}(f, p) = \mathbb{E}_p[\mathcal{L}(f(x), x)]
\end{equation}
We summarize the notational mapping between this framework and various learning scenarios in Table~\ref{tab:mapping}.

\paragraph{Supervised classification}
In a supervised classification setting, instances can be viewed as consisting of two features: a discrete feature $d_1$ corresponding to a class and a continuous feature $c_1$ specifying a particular example within the class:
\begin{equation}
    x = A(c_1, d_1).
\end{equation}
The optimal action function is to output the class $f^*(x)=d_1$. Observe that the continuous feature $c_1$ is irrelevant to the task; the goal of this task is to learn from training data how to extract the feature $d_1$ while maintaining invariance to $c_1$. 

As a concrete example, consider the MNIST~\citep{lecun1998mnist} classification dataset; here labels correspond to $d_1$, and $c_1$ corresponds to all other information necessary to specify each image (e.g. line width, rotation, overall size). In general, $c_1$ may not be directly extractable from images; one method of estimating a representation of $c_1$ is to train a class-conditional autoencoder on MNIST images, and extract the latent representation of each image as $c_1$.

\paragraph{Meta-learning for few-shot supervised classification}
In a few-shot meta-learning setting, instances can be viewed as consisting of two features: a continuous feature $c_1$ corresponding to a set of related classes, and a continuous feature $c_2$ corresponding to a set of samples from those classes.
\begin{equation}
    x = A(c_1, c_2)
\end{equation}
The objective is to output a function $g$ such that $g$ maps inputs in set of classes to corresponding labels: $f^*(x) = g$. Note that $g$ here depends only on $c_1$; the desired function $f^*$ is invariant to $c_2$. Typically, a meta-learning system will receive instances from a subset of $c_1$ values, and will be asked to generalize to a different subset within the same distribution. To allow the meta-learning system to associate instances with classes on the unseen $c_1$ values, a few inputs with the unseen $c_1$ are provided for each class.

An example of such a meta-learning problem is few-shot classification on the Omniglot~\citep{lake2015HumanlevelCL} dataset. Here, $c_1$ corresponds to an alphabet, and $c_2$ corresponds to a set of sample images within that alphabet.

\paragraph{Reinforcement learning}
In a reinforcement learning setting, instances can be viewed as consisting of a single feature specifying all the information in an instance. In the language of reinforcement learning, instances may typically correspond to the state of an environment for a fully-observed Markov decision process (MDP), or a sequence of observations for a partially-observed MDP. The single feature describing the instance may in general be discrete or continuous; for notational purposes, we default to continuous features when features may be either continuous or discrete.
\begin{equation}
    x = A(c_1)
\end{equation}
Here, $f^*(x)$ denotes the optimal action to take given state or observation sequence $c_1$. The goal of the learning system is to generalize over unseen states or observation sequences. However, note that the agent is not asked to generalize over parameters of the underlying MDP; the MDP may be fixed.

Note that our framework does not capture certain important aspects of RL settings; for instance, for online RL, it does not capture the fact that instances observed during may depend on the previous actions of the agent. This dependence is important to analyze the exploration-exploitation tradeoff of an RL task, for instance. Nevertheless, our framework is sufficient to assess the difficulty of generalizing in RL. The fact that the set of states or observation sequences over which is trained depends on the behavior of the agent has limited relevance to the generalization difficulty of the task.

As an example, consider the Cartpole~\citep{florian2007correct} continuous control task implemented in the MuJoCo suite~\citep{todorov2012mujoco}. Here, $c_1$ corresponds to the observed state of the environment from which the agent is trained to extract the optimal action.

\paragraph{Unsupervised autoencoding}
In unsupervised autoencoding, the optimal action function $f^*(x)$ is simply the identity function $f^*(x)=x$. The goal is to generalize on a test set after observing a set of training instances. Usually, this is non-trivial as the learning system is constrained such that learning an identity function is infeasible. However, we highlight that our measure is model-agnostic; in this case the task difficulty can be viewed as a measure of complexity of the instance set. A specific example of such a problem may be unsupervised autoencoding of MNIST digits, each of which has a single continuous valued feature.

\paragraph{Meta-reinforcement learning}
In a meta-reinforcement learning setting, instances correspond to trajectories through \textit{environments}, which consist of two features: a continuous feature $c_1$ parameterizing the environment and a continuous feature $c_2$ corresponding to the specific trajectory within the environment. Both features may in general be discrete or continuous; we use continuous features for our notation:
\begin{equation}
    x = A(c_1, c_2)
\end{equation}
The objective is to output a \textit{policy} $g$ such that $g$ that maps states in an environment to actions: $f^*(x) = g$. Note that $g$ depends only on $c_1$; the desired function $f^*$ is invariant to $c_2$. The learning system is required to learn from the set of $c_1$ in the training set and generalize to an unseen set of $c_1$.

As an example, consider the Meta-World~\citep{yu2019metaworld} meta-reinforcement learning benchmark in which an agent controlling a robotic arm must generalize from a set of training skills to an unseen set of test skills. The skills include simple manipulations such as pushing, turning, placing etc. In our formulation, $c_1$ corresponds to the skill and $c_2$ corresponds to specific trajectories of the arm performing the skill.

\section{Proof of Theorem \ref{theorem:exactbound}} \label{sec:proof}

\begin{proof}
Note that for any $x_1,x_2,\theta$, we have
\begin{align}
    \mathcal{L}(f(x_1;\theta), x_1) &= \mathcal{L}\Big(f(x_2;\theta) + (f(x_1;\theta^*)-f(x_2;\theta^*)) \nonumber \\ &\qquad\qquad + (f(x_1;\theta)-f(x_1;\theta^*)-f(x_2;\theta)+f(x_2;\theta^*)), x_1\Big) \\
    &\le \mathcal{L}\Big(f(x_2;\theta) + (f(x_1;\theta^*)-f(x_2;\theta^*)), x_1\Big) \nonumber \\ &\qquad\qquad+ {L}_{\mathcal{L}}||f(x_1;\theta)-f(x_1;\theta^*)-f(x_2;\theta)+f(x_2;\theta^*)|| \label{LL}
    \\ &\le \mathcal{L}(f(x_2;\theta), x_2) \nonumber \\ &\qquad + L_\mathcal{L} \|f(x_1;\theta)-f(x_1;\theta^*)-f(x_2;\theta)+f(x_2;\theta^*)\| \label{Ls} \\ &\le \mathcal{L}(f(x_2;\theta), x_2) + L_\mathcal{L} L_f d(x_1,x_2) d(\theta,\theta^*), \label{Lf}
\end{align}
where \eqref{LL} is a result of the Lipschitz condition on $\mathcal{L}$, \eqref{Ls} is a result of the shift-invariance of $\mathcal{L}$ and \eqref{Lf} is a result of our condition on $f$. This allows us to bound the difference
\begin{equation}
e(\theta,p) - e(\theta,q) = \mathbb{E}_p[\mathcal{L}(f(x;\theta),x)] - \mathbb{E}_q[\mathcal{L}(f(x;\theta),x)]
\end{equation}
as follows: let $\gamma$ be a distribution on $(x_1,x_2)$ with marginal distributions $p$ and $q$, respectively. Then
\begin{align}
    e(\theta,p) - e(\theta,q) &= \mathbb{E}_\gamma[\mathcal{L}(f(x_1;\theta), x_1) - \mathcal{L}(f(x_2;\theta),x_2)] \\ &\le \mathbb{E}_\gamma[L_\mathcal{L}L_f d(x_1,x_2) d(\theta,\theta^*)].
\end{align}
By the definition of Wasserstein distance,
\begin{equation}
e(\theta,p) - e(\theta,q) \le L_\mathcal{L}L_fd(\theta,\theta^*) \inf_\gamma \mathbb{E}_\gamma[d(x_1,x_2)] = L_\mathcal{L}L_f d(\theta,\theta^*) W(p,q),
\end{equation}
where the infimum is taken over all distributions $\gamma$ with marginal distributions $p,q$. Therefore, for all $\theta\in\Theta_q$ with $d(\theta,\theta^*)\le\frac{\varepsilon}{L_\mathcal{L}L_fW(p,q)}$, we have $e(\theta,p)\le e(\theta,q)+\varepsilon=\varepsilon$ (since $e(\theta,q)=0$). Recall that we assume a uniform prior on $\theta$. Note that this can typically be made to hold with an appropriate choice of reparameterization: if our original density on $\theta$ is $P(\theta)$, then we may choose new parameters $\tilde \theta = g(\theta)$ satisfying $|det(\frac{\partial g(\theta)}{\partial \theta})| = P(\theta)$ to yield a uniform density over $\tilde \theta$.

Using the uniform prior assumption on $\theta$, this gives
\begin{align}
    \tilde{I} &= -\log \mathbb{P}(e(\theta,p)\le\varepsilon\mid e(\theta, q) = 0) \\ &\le -\log \mathbb{P}\left(d(\theta,\theta^*)\le\frac{\varepsilon}{L_\mathcal{L}L_fW(p,q)} \mid \theta\in\Theta_q\right),
\end{align}
as desired.
\end{proof}

\section{Validity of Theoretical Assumptions} \label{sec:assumptions}
To arrive at our final expression for task difficulty in Equation~\eqref{final}, we make a number of assumptions. In this section, we further discuss the validity of each of our assumptions.

\subsection{Shift-invariance of Loss Function}
In many supervised classification and regression settings, the shift invariance assumption is satisfied by a squared error loss function. For clarity, in supervised regression, we denote inputs as $x$ and outputs as $y$. A squared error loss function is constructed as $\mathcal{L}(y, x) = ||y - f^*(x)||_2^2$ which is shift invariant since 

\begin{equation}
    \mathcal{L}(y + f^*(x_1) - f^*(x_2), x_1) = ||y + f^*(x_1) - f^*(x_2) - f^*(x_1)||_2^2 = ||y-f^*(x_2)||_2^2 = \mathcal{L}(y, x_2).
\end{equation}

Next, in a few-shot meta-learning setting, there are reasonably broad conditions under which shift-invariance holds:  Instances correspond to datasets and actions correspond to functions mapping from inputs to the outputs of the inner loop task. For clarity, we define the inner task inputs and outputs as $i$ and $o$; the goal is to learn a mapping from datasets $x$ to functions $y$ that map $o=y(i)$. Furthermore, suppose a loss function $L(o, i)$ is defined for the inner loop task. Then, suppose the meta-loss function is defined as $\mathcal{L}(y, x) = \mathbb{E}_{i \sim x}[L(y(i), i)]$. We assume that functions $y$ are additive in the following sense: for any $y_1, y_2, x$, $(y_1 + y_2)(x) = y_1(x) + y_2(x)$. Finally, we assume a generalized version of shift-invariance for the inner loss function $L$:
\begin{equation}
    \mathbb{E}_{i \sim x_2}[L(y(i), i))] = \mathbb{E}_{i \sim x_1}[L(y(i) + f^*(x_2)(i) - f^*(x_1)(i), i)]
\end{equation}
This then implies shift-invariance for the meta-loss $\mathcal{L}$:
\begin{equation}
    \mathcal{L}(y, x_2) = \mathcal{L}(y + f^*(x_2) - f^*(x_1), x_1)
\end{equation}

The shift invariance assumption implies addition is defined over the action space. We believe that such additivity is reasonable for most learning problems. For instance, in classification settings, action spaces typically correspond to a vector of logits over all classes; these action spaces are Euclidean and actions can be added. Similarly, in reinforcement learning, both continuous and discrete action spaces are often additive, with the logits of a probability distribution over possible actions outputted in the discrete case. Moreover, shift-invariance is a property of the commonly used squared error loss, and indeed any loss function that has the form $\mathcal{L}(y, x) = G(y - f^*(x))$ for some function $G$.

At the same time, we note that many loss functions may not satisfy shift invariance. For example the error rate (computed as the fraction of incorrectly classified points in supervised classification), is neither differentiable nor  shift-invariant. Thus, during training, differentiable proxy loss functions such as squared error or cross-entropy loss are often used, to similar effect. In other words, shift-invariant proxies can exist for non-shift-invariant losses. In reinforcement learning, loss functions may correspond to a Q function computing the cumulative negative reward over an episode conditioned on taking a particular action. In this case, loss functions often cannot be written in closed form, making it difficult to show shift invariance for these functions even if they are actually shift invariant.

For these reasons, we believe shift-invariance is a useful approximation that allows us to theoretically prove properties of task difficulty. We further note that the result of Theorem 1, which we used shift-invariance to prove, may hold even for loss functions without without shift invariance:

\begin{equation}
    \mathcal{L}(f(x_1;\theta), x_1) - \mathcal{L}(f(x_2;\theta),x_2) \le L_\mathcal{L}L_f d(x_1,x_2) d(\theta,\theta^*)
\end{equation}

Intuitively, it states that when parameters are near their optimal values, small changes in inputs $x$ yield only small changes in the loss, and this feels like a very simple and general requirement.

\subsection{Second-order Condition on Model}
The second order condition in Equation~\eqref{eqn:second_order} ($\|f(x_1;\theta_1)-f(x_1;\theta_2)-f(x_2;\theta_1)+f(x_2;\theta_2)\| \le L_fd(x_1,x_2)d(\theta_1,\theta_2)$) assumes that changes in the model $f$ can be bounded when the inputs and parameters of $f$ change a small amount. First, observe that if $f$ is twice differentiable, then the condition always holds for some choice of $L_f$ assuming $x$ and $\theta$ are defined on a bounded domain. As a concrete example, consider the case of image classification with a Tanh activated neural network where each input pixel is bounded in range $[0, 1]$ and network parameters are bounded by a large radius (the bounded parameter assumption is reasonable since practically speaking, trained neural network parameters can be bounded by some radius depending on the amount of training).

As an example of a function that does not satisfy the condition, note that neural networks with the ReLU activation function may not satisfy Equation~\eqref{eqn:second_order}. However, it has been observed that ReLU networks behave like neural networks with smooth, polynomial activation functions \citep{poggio2017theory}; thus, it may be reasonable to apply our assumption even to ReLU networks.

\subsection{Shape of Interpolating Hypothesis Space} \label{sec:hypothesis_shape}
Regarding our approximation of the interpolating hypothesis space, we first note that the dimensionality of the interpolating hypothesis is \textit{lower} than the full dimensionality of the hypothesis space. Taking an interpolating hypothesis and slightly perturbing it in an \textit{arbitrary} direction will likely break interpolation. By contrast, we expect that  there exists some subspace in which perturbations along the subspace maintain interpolation (i.e. there is a continuous manifold of interpolating hypotheses). This is analogous to how in neural networks, local minima can often be in \textit{flat} regions in which perturbing parameters along those directions does not significantly affect the loss \citep{keskar2017large}.

We approximate the interpolating hypothesis space as a disk in the overall hypothesis space (for clarity, we have revised our terminology from "ball" to "disk"); this disk has lower dimensionality than the overall hypothesis space but is embedded in a higher dimensional space. The disk assumption preserves key properties that we may reasonably expect of interpolating hypothesis spaces, namely 1) low-dimensionality and 2) boundedness, while being analytically tractable.

Finally, we emphasize that the disk approximation is made to approximate the volume of the interpolating hypothesis space. The detailed shape of the interpolating hypothesis space is not itself central to our result, but rather the scaling of its volume is more important. We believe the disk assumption captures the key scalings: the dependence of the volume on $b$ and $d$.

\subsection{Empirically Validating Wasserstein Distance Approximation} \label{sec:wasserstein}
In this section, we empirically validate our approximation of Wasserstein distance made in Section~\ref{sec:empirical}. Recall that We approximate the Wasserstein distance between a uniform distribution $p$ on a sphere of radius $4$ and an empirical distribution $q$ of $n$ randomly sampled points on the sphere as:
\begin{equation}
    W(p, q) \in O(r n^{-1/m})
\end{equation}
Note that this expression grows linearly with $r$ and decays sub-linearly with $n$, with a slower decay with larger $m$ for which more points may be required for all regions to be within a fixed radius of any point. Theoretically, we may expect this approximation to hold for large $n$ to achieve near uniform $q$. If this scaling holds, then we would expect $\log W(p, q)$ to decrease linearly with $\log n$ (with slope $-1/m$). Similarly, we would expect $\log W(p, q)$ to decrease linearly with $1/m$ (with slope $-\log n$).

We conduct experiments on a unit sphere ($r = 1$). We approximate a uniform distribution by sampling $10000$ points on a unit sphere and calculate the exact Wasserstein distance between the empirical distribution of $10000$ points and the empirical distribution of $n<10000$ points (we vary $n$ from $10$ to $500$). Distances are computed for varying choices of $m$ varied from $3$ to $19$.

As illustrated in Figure~\ref{fig:wasserstein}, we find that the relationship between $\log W(p, q)$ and $\log n$ is linear with a steeper slope for small $m$, as expected. We also observe that the relationship between $\log W(p, q)$ and $\log m$ is nonlinear, with distances for large $m$ being larger than expected by a linear trend-line. Nevertheless, a linear trend-line still produces a strong fit to the empirical data.

\begin{figure}[h!]
    \centering
    \begin{subfigure}{.49\textwidth}
      \centering
      \includegraphics[width=\linewidth]{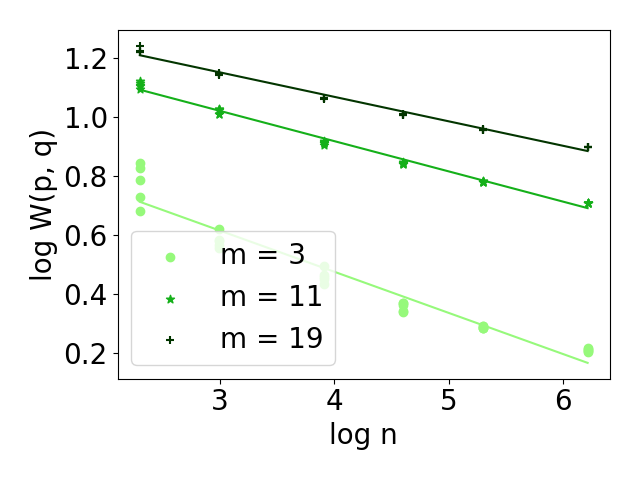}
    \end{subfigure}%
    \begin{subfigure}{.49\textwidth}
      \centering
      \includegraphics[width=\linewidth]{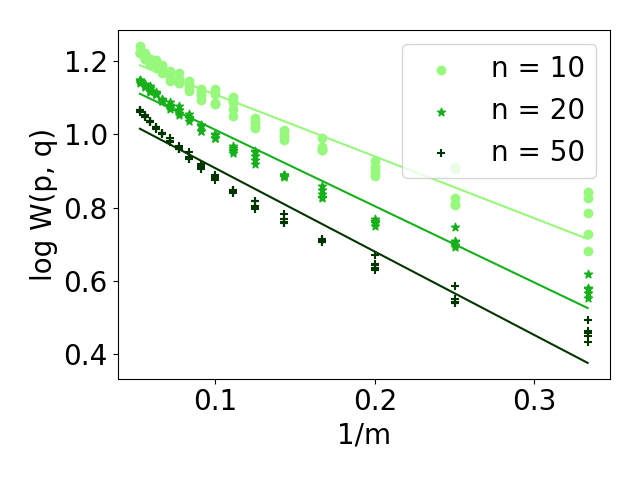}
    \end{subfigure}
    \caption{Wasserstein distance estimates between a uniform and empirically sampled uniform distribution on a unit sphere as a function of the dimensionality of the sphere $m$ and the number of sampled points $n$. Results are shown for five trials in each setting, and linear trend-lines are fitted to the results.
    }
    \label{fig:wasserstein}
\end{figure}

\section{Empirically Approximating Task Difficulty}  \label{sec:empirical}
In Section~\ref{approx_bound}, we presented our inductive bias complexity approximation for supervised classification. Here, we provide more details on how to apply and compute our approximation to other settings.

First, we write a slightly modified version of our task difficulty approximation
\begin{equation} \label{eqn:generalized_task_diff}
\tilde{I} \approx (2d z E-nd) \left(\log b + \frac{1}{2}\log z + \log d + \log K - \frac{1}{m}\log n - \log\frac{\varepsilon}{L_\mathcal{L}} + \log c\right).
\end{equation}
where $c=\sqrt{\frac{m}{6}}\left(\frac{2\pi^{(m+1)/2} z}{\Gamma\left(\frac{m+1}{2}\right)}\right)^{1/m}$, and $K$ and $E$ are functions of $m$ and the maximum frequency $M=2\pi r/\delta$. In supervised classification settings, $z=d$.

In general settings, the parameters controlling task difficulty ($m, n, \varepsilon, L_\mathcal{L}, r, b, d, z$) can be interpreted as follows: $m$ is the intrinsic dimensionality of the manifold on which instances lie. $n$ is the number of training points. $\varepsilon$ is the desired error rate on the task. $L_\mathcal{L}$ is the Lipschitz constant of the loss function; for tasks without an explicit, differentiable loss function (such as in reinforcement learning), it can be interpreted as the overall sensitivity of the loss with respect to the model output. $r$ and $b$ is are bound on the size of the model input and output respectively. $d$ corresponds to the dimensionality of the model output. $z$ corresponds to the number of submanifolds on which instances lie (with each combination of discrete features of instances corresponding to a submanifold). In supervised classification settings, the number of submanifolds is simply the number of classes, which is why $z$ is set equal to $d$ in these settings.

In order to compute the difficulty $\tilde{I}$ of a task, we must determine the values of these parameters. For all tasks, we set $\varepsilon/L_\mathcal{L}$ as the desired performance level, which is the distance in the output space corresponding to an error of $\varepsilon$. For each task, we set a fixed desired performance level corresponding to the category of the task (i.e. a fixed error rate of $1.0 \%$ for image classification tasks and a fixed error rate of $0.001$ for RL tasks); see Table~\ref{tab:desired_performance} However, because typical well-performing models in the literature trained on MNIST/ImageNet achieve error significantly better/worse performance than $1.0 \%$, we adjust the desired error rate to $0.1 \%$ and $10.0 \%$ for MNIST and ImageNet respectively. This is done to properly quantify the difficulty of these tasks in the error rate regimes in which these tasks are typically used. However, our task difficulty numbers are relatively insensitive to the precise choice of desired performance level: using a fixed error rate of $1.0 \%$ for all tasks, we find that the scale of task difficulties do not significantly change. Specifically, task difficulty falls incrementally for MNIST from $1.09 \times 10^{16}$ bits to $9.03 \times 10^{15}$ bits, and rises incrementally for ImageNet from $2.48 \times 10^{43}$ bits to $2.81 \times 10^{43}$ bits.

\begin{table}[h!] 
\centering
\caption{Desired error rates for different tasks used to compute a measure of inductive bias complexity. For each task, desired error rate is set as the performance of typical well-performing models proposed in the literature trained on the task. For classification tasks, error rate corresponds to test set accuracy. For Cartpole, error rate corresponds to the probability of making an incorrect action at any given time step. For MuJoCo continuous control tasks with continuous action spaces, error rate corresponds to desired distance in from the optimal action.} \label{tab:desired_performance}
\adjustbox{width=\textwidth}{
\begin{tabular}{cccccccccc}
\toprule
Quantity & MNIST & SVHN & CIFAR-10 & ImageNet & Omniglot & Cartpole & Reacher & Hopper & Half-cheetah \\ \midrule
Error rate     & 0.1 \%    & 1.0 \%   & 1.0 \%       & 10.0 \% & 1.0 \% & 0.001 & 0.001 & 0.001 & 0.001 \\
\bottomrule
\end{tabular}
}
\end{table} 

While intrinsic dimensionality $m$ is known for some tasks (e.g. MuJoCo~\citep{du2018samples} tasks), it must be estimated for others; we use the nearest neighbors estimation approach described in \citet{pope2021intrinsic}. The spatial resolution $\delta$ is set based on the nature of each task. For classification tasks, it is natural to set the spatial resolution as the margin between classes. For RL tasks, it is set as a scale on which trajectories with perturbations of size $\delta$ are not likely to be meaningfully different. For all tasks, we approximate $r$ to be the maximum norm of the training data points.

\subsection{Empirically Computing Classification Task Difficulty} \label{sec:classempirical}

For an image classification task, the model output is a probability distribution on the classes so $b$ to be 1. We set $\delta$ to be the minimum distance between points of different classes, as this is the minimum distance between inputs that $f$ must be able to distinguish. For large datasets where computing $\delta$ exactly is impractical, we estimate it using statistical methods. This procedure is explained in Appendix \ref{sec:delta}.

We estimate $m$ using the technique described by \citet{pope2021intrinsic}, which gives an MLE estimate based on distances to nearest neighbors. We use $k=5$ nearest neighbors for our estimation. For large datasets where computing nearest neighbors for all points is impractical, we use the anchor approximation from \citet{pope2021intrinsic}, selecting a random sample of points to use as anchors and finding their nearest neighbors in the entire dataset. To estimate $m$ for ImageNet, we randomly select 2000 points to be our anchors. See Table~\ref{tab:estimated_quantities} for our estimates of $m$ and $\delta$ for image classification benchmarks.

\begin{table}[h!] 
\centering
\caption{Estimated $m$ and $\delta$ for various image classification benchmarks. These estimates are used to estimate the amount of inductive bias required to solve each task.} \label{tab:estimated_quantities}
\begin{tabular}{ccccc}
\toprule
Quantity & MNIST & SVHN & CIFAR-10 & ImageNet \\ \midrule
$m$      & 14    & 19   & 27       & 48       \\
$\delta$ & 2.4   & 1.6  & 2.8      & 65       \\ \bottomrule
\end{tabular}
\end{table}

\subsection{Empirically Computing Meta-Learning Task Difficulty} \label{sec:metaempirical}

The Omniglot one-shot classification task is to identify the class of an image given an example image of each of 20 letters. Thus, the meta-learning task is to learn a function $f$ that maps a set of 20 images to a function $g$ that can map a single image to a probability distribution over 20 classes. In the following discussion, for quantities such as $z$, $d$, and $E$, a subscript $f$ will be used to denote that we are considering these values for the function $f$, and a subscript $g$ will be used to denote that we are considering these values for the function $g$.

In our framework, the input to $f$ has no discrete features, so $z_f=1$. Since $g$ is the output of $f$, $d_f$ equals the dimensionality of the parameterization of $g$, which is $2z_gd_gE_g$. The function $g$ classifies an image among 20 classes, so $z_g=20$ and $d_g=19$. We can compute $E_g$ as a function of $M_g=2\pi r_g/\delta_g$ and $m_g$. As with image classification tasks, we set $r_g$ to be the maximum norm of an image in the dataset and $\delta_g$ to be the minimum distance between images of different classes. We set $m_g$ to be the dimensionality $m_0$ of an image drawn from a single alphabet. This is done by estimating the dimensionality of each alphabet in the dataset and averaging the results.

Since the input to $f$ is 20 images, $r_f=r_g\sqrt{20}$. To compute $m_f$, let $m_1>m_0$ be the dimensionality of an image drawn from the entire dataset. Then the dimensionality of the alphabet underlying the input to $f$ is $m_1-m_0$, and the dimensionality of the images themselves is $20m_0$. Therefore, $m_f=m_1+19m_0$. We want $f$ to be sensitive to a change in only one of the 20 images, so we set $\delta_f=\delta_g$.

The training dataset consists of several alphabets. We write $\ell_a$ for the number of letters in alphabet $a$. We set $n$ to be the number of sets of 20 images that can be drawn from the training dataset that represent 20 different letters in the same alphabet. Since the training dataset contains 20 images representing each letter,
\begin{equation}
    n = \sum_a \binom{\max\{\ell_a,20\}}{20} \cdot 20^{20},
\end{equation}
where the sum is taken over all training alphabets $a$.

Finally, $b_f$ is the maximum norm of $f$'s output, which contains the parameterization of $g$. We found that this can be bounded by $b_g\sqrt{z_gd_g}$. We have already computed $z_g$ and $d_g$, and since $g$ outputs a probability distribution, $b_g=1$. We then use the values $z_f$, $d_f$, $m_f$, $\delta_f$, $n$, $b_f$, and the state-of-the-art error rate $\varepsilon/L_\mathcal{L}$ to compute the task difficulty.

\subsection{Empirically Computing Reinforcement Learning Task Difficulty} \label{sec:rlempirical}
For evaluating task difficulty for reinforcement learning tasks, we use some different approximations than for the supervised classification that are more suited to our particular setting; in particular, we use a different assumption for the manifold on which instances lie and a different approximation for Wasserstein distance $W(p, q)$.

We first write our general version of the task difficulty expression without applying the Wasserstein distance approximation in Section~\ref{approx_bound}:
\begin{equation}
    \tilde{I} \approx (2dzE-nd) \left(\log b + \frac{1}{2} \log z + \log d + \log K - \log r + \log W(p,q) - \log\frac{\varepsilon}{L_\mathcal{L}}\right).
\end{equation}
In the reinforcement learning tasks we evaluate, $z=1$, so we can write:
\begin{equation}
    \tilde{I} \approx (2dE-nd) \left(\log b + \log d + \log K - \log r + \log W(p,q) - \log\frac{\varepsilon}{L_\mathcal{L}}\right).
\end{equation}
The input to $f$ is an $m$-dimensional observation, which can be considered as a point in the hypercube $[-\pi,\pi]^m$ after scaling. Parameterizing $f$ as a sum of eigenfunctions of the Laplace-Beltrami operator,
\begin{equation}
    f(x;\theta) = \sum_{p\in\mathbb{Z}^m} \theta_p e^{ip^\top x}
\end{equation}
\citep{treves2016topological}. The wavelength of the component corresponding to $p$ is $2\pi/\|p\|$, so we restrict our parameterization to components satisfying $\|p\|\le 2\pi/\delta$. Thus, we have $L_f=2\pi/\delta$. The number of eigenfunctions $E$ is the number of vectors $p\in\mathbb{Z}^m$ satisfying $\|p\|\le 2\pi/\delta$, which we approximate as the volume of an $m$-dimensional ball with radius $2\pi/\delta$, giving
\begin{equation}
    E \approx \left(\frac{2\pi}{\delta}\right)^m V_m,
\end{equation}
where $V_m=\frac{\pi^{m/2}}{\Gamma\left(\frac{m}{2}+1\right)}$ is the volume of an $m$-dimensional ball with radius 1. We now find the Wasserstein distance $W(p,q)$. In the optimal transport, each training data point is associated with an equally-sized region of the hypercube. We approximate these regions as hypercubes with side length $s=2\pi n^{-1/m}$. The expected distance between two randomly chosen points in one of these hypercubes is at most $s\sqrt{m/6}$ \citep{anderssen1976cube}, so
\begin{equation}
    W(p,q) \approx \sqrt{\frac{m}{6}} \cdot 2\pi n^{-1/m}.
\end{equation}
Finally, we note that $f$'s output is a force in $\{-1,+1\}$ in the discrete case and an element of $[-1,1]^d$ in the continuous case. Therefore, we can set $b=\sqrt{d}$ in all cases. Using these values, we find
\begin{equation}
    \tilde{I} \approx d \left(2 \left(\frac{2\pi}{\delta}\right)^m V_m - n\right) \left(\log\frac{4\pi^2}{\sqrt{6}} + \log d + \frac{1}{2}\log m - \log\delta - \frac{1}{m}\log n - \log\frac{\varepsilon}{L_\mathcal{L}}\right).
\end{equation}
Recall that $d$ is the dimensionality of the output space, and $m$ is the dimensionality of the observation space. In the noisy Cartpole task, $T>1$ observations may be needed to determine the optimal action. Each observation is a two-dimensional state (containing the pole's angular position and velocity), so we set $m=2T$. For fully observed tasks, note that $T=1$.

It remains to choose values for the parameters $n$, $\delta$, and $\varepsilon/L_\mathcal{L}$. We choose the relatively small value of $0.001$ for $\delta$ and $\varepsilon/L_\mathcal{L}$ for all tasks. We choose a small value for $\delta$ because small changes in initial conditions can lead to large changes over time, and we choose a small value for $\varepsilon/L_\mathcal{L}$ because agents should be able to perform near-optimal actions to achieve the task's goal. In the noisy Cartpole task, we set $n=10000$, corresponding to observing 100 episodes with 100 timesteps each. In the MuJoCo environments, we set $n=1000000$, corresponding to observing 1000 episodes with 1000 timesteps each. We emphasize that the most important factor contributing to task difficulty is $m$, which is determined by the task specification.

\section{Estimating $\delta$ for Classification Tasks} \label{sec:delta}

If we cannot compute $\delta$ exactly, we estimate it using extreme value theory \citep{dekkers1989extreme}.

We wish to find the maximum value of a distribution, in this case the distribution of the reciprocal of the distance between a random pair of points from different classes. The extreme value distribution is characterized by the extreme value index $\gamma$. Let $X_1,\dots,X_n$ be values drawn from the distribution, and let $X_{1,n}\le\cdots\le X_{n,n}$ be these values in sorted order. For a fixed $k<n$, define
\begin{equation}
    M^{(r)}_n = \frac{1}{k} \sum_{i=0}^{k-1} (\log X_{n-i,n} - \log X_{n-k,n})^r.
\end{equation}
Then we can estimate $\gamma$ as
\begin{equation}
    \hat\gamma_n = M^{(1)}_n + 1 - \frac{1}{2} \left(1 - \frac{(M^{(1)}_n)^2}{M^{(2)}_n}\right)^{-1}.
\end{equation}
Empirically, we typically find $\hat\gamma_n>0$. In this case, we estimate $\delta$ by estimating the quantile corresponding to the top $1/P$ of the distribution, where $P$ is the number of pairs of training data points from different classes. By assuming that training data points are evenly distributed among the classes, we approximate $P$ as $n^2(1-1/C)$, where $C$ is the number of classes. If $a_n=kP/n$, this gives an estimate of
\begin{equation}
    \frac{a_n^{\hat\gamma_n}-1}{\hat\gamma_n} \cdot X_{n-k,n}M^{(1)}_n + X_{n-k,n}
\end{equation}
for $1/\delta$.

The theoretical results \citep{dekkers1989extreme} require that $k$ and $n/k$ go to infinity as $n\to\infty$. For ImageNet, we choose $n=40000$ and $k=200$. To reduce the noise in our estimation, we compute 10 estimates for $1/\delta$ and average the results to obtain our final estimate for $1/\delta$.

\section{Additional Task Variations} \label{app:task_vary}
In this section, in order to provide more intuition for our empirical results on inductive bias complexity, we compute the inductive bias complexity of additional variations of tasks.

\subsection{Task Combinations}
\paragraph{Setup} We first consider inductive bias complexities of task combinations. We assume we are given two tasks, corresponding to the mapping between instances $x_1$ to $y_1 = \bar f_1^*(x_1)$ and $x_2$ to $y_2=\bar f_2^*(x_2)$ respectively. Furthermore, we assume that training and test distributions $q_{1}, p_{1}$ and $p_{2}, q_{2}$ are provided. We then construct a combination of the two tasks as the mapping from $(x_1, x_2)$ to $f^*(x_1, x_2)=(\bar f_1^*(x_1), \bar f_2^*(x_2))$. The test distribution of instances for the new task is constructed as:
\begin{equation}
    p(x_1, x_2) = p_1(x_1) p_2(x_2)
\end{equation}
And the training distribution constructed as:
\begin{equation}
    q(x_1, x_2) = q_1(x_1) q_2(x_2)
\end{equation}
Note that if $q_1$ and $q_2$ each correspond to training sets of size $n_1$ and $n_2$, $q$ corresponds to a training set of size $n_1 n_2$. We assume that the two tasks have loss functions $\mathcal{L}_1$ and $\mathcal{L}_2$ respectively. For the purposes of our analysis in this section, we assume that the loss functions satisfy $\mathcal{L}_i(y, x)\ge 0$, and equality holds if and only if $y=f_i^*(x)$ for $i=1,2$. We construct the loss function $\mathcal{L}$ for the combined task as:
\begin{equation}
    \mathcal{L}((y_1, y_2), (x_1, x_2)) = \alpha \mathcal{L}_1(y_1, x_1) + (1-\alpha) \mathcal{L}_2(y_2, x_2)
\end{equation}
for a parameter $\alpha$ in $(0, 1)$. We denote a hypothesis as $\theta$ parameterizing a function that inputs $(x_1, x_2)$ and outputs the predictions for each task: $f(x_1, x_2; \theta) = (f_1(x_1, x_2; \theta), f_2(x_1, x_2; \theta))$. Then, the generalization error of a particular hypothesis $\theta$ under distribution $p$ (and analogous for $q$) is:
\begin{multline}
    e(\theta, p) = \hat{e}(f(\cdot;\theta), p) = \mathbb{E}_p[\mathcal{L}(f(x_1, x_2; \theta), (x_1, x_2))]\\ = \alpha \mathbb{E}_{p}[\mathcal{L}_1(f_1(x_1, x_2), x_1)] + (1-\alpha) \mathbb{E}_{p}[\mathcal{L}_2(f_2(x_1, x_2), x_2)] 
\end{multline}
For notational convenience, we will also define $e_i(\theta, p)$ as (and analogous for $q$):
\begin{equation}
    e_i(\theta, p)  = \mathbb{E}_{p}[\mathcal{L}_i(f_i(x_1, x_2;\theta),x_i)]
\end{equation}
for $i=1,2$. Thus, we may express $e(\theta, p)$ as:
\begin{equation}
    e(\theta, p) = \alpha e_1(\theta, p) + (1-\alpha) e_2(\theta, p)
\end{equation}
Note that $e_1(\theta, p)$ and $e_2(\theta, p)$ \textit{do not} correspond to errors of hypothesis of $\theta$ on tasks 1 and 2 directly because $\theta$ parameterizes a function which takes both $x_1$ and $x_2$ as input. Instead, $e_1(\theta, p)$ (and analogously for $e_2(\theta, p)$) corresponds to the error of a variant of task 1 where instances $x_1$ are augmented with a \textit{distractor} input $x_2$ which is irrelevant to the task (producing instances $(x_1, x_2)$), but the task output $y_1$ is constructed the same way as $y_1 = \bar f_1^*(x_1)$. For clarify, we define this mapping as $y_1 = f_1^*(x_1, x_2)$ (and analogously for the variant of task 2). Importantly, note that this variant of task 1 has an input with a larger intrinsic dimensionality.

Now, we consider the inductive bias complexity of the combined task corresponding to $f^*$. Recall that we define inductive bias complexity as:
\begin{equation}
    \tilde{I} = -\log \mathbb{P}(e(\theta,p)\le\varepsilon\mid e(\theta, q) = 0)
\end{equation}
We relate this inductive bias complexity to the inductive bias complexities of tasks corresponding to $f^*_1$ and $f^*_2$:
\begin{equation}
    \tilde{I}_i = -\log \mathbb{P}(e_i(\theta,p)\le\varepsilon\mid e_i(\theta, q) = 0)
\end{equation}
where $\tilde{I}_i$ corresponds to the inductive bias complexity for the task corresponding to $f^*_i$. Under certain conditional independence assumptions, we are able to relate $\tilde{I}$ and the $\tilde{I}_i$. Specifically, we assume that if $\theta$ interpolates task 1, then additionally interpolating task 2 does not change the probability that $\theta$ will generalize on task 1 (and vice versa). We also assume that if $\theta$ interpolates both task 1 and task 2, then generalizing on task 1 does not affect the probability that $\theta$ generalizes on task 2 (and vice versa). This is reasonable if we consider task 1 and task 2 to be constructed independently in the sense that inputs $x_2$ do not provide information on $f_1^*(x_1, x_2)$ and vice versa; being able to interpolate or generalize on one task does not affect the ease of generalization on the other. Thus, we are able to make the following statement:
\begin{theorem}
    Assume the following conditional independencies:
    \begin{equation}
        e_1(\theta,p)\le\varepsilon \ind e_2(\theta,q) = 0  | e_1(\theta,q) = 0
    \end{equation}
    \begin{equation}
        e_2(\theta,p)\le\varepsilon \ind e_1(\theta,q) = 0  | e_2(\theta,q) = 0
    \end{equation}
        \begin{equation}
        e_1(\theta,p)\le\varepsilon \ind e_2(\theta,p)\le\varepsilon | e_1(\theta,q) = 0, e_2(\theta,q) = 0
    \end{equation}
    Then,
    \begin{equation}
        -\log (e^{-\tilde I_1} + e^{-\tilde I_2})  \leq  \tilde I \leq \tilde I_1 + \tilde I_2
    \end{equation}
\end{theorem}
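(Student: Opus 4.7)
The plan is to show both inequalities by carefully translating the two boundary events $\{e(\theta,p)\le\varepsilon\}$ and $\{e(\theta,q)=0\}$ into statements about $e_1$ and $e_2$, then invoking the three conditional independence hypotheses.

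First I would observe that, because $\mathcal{L}_i(y,x)\ge 0$ with equality iff $y=f_i^*(x)$, both $e_1(\theta,q)$ and $e_2(\theta,q)$ are nonnegative, so the identity
\[
e(\theta,q)=\alpha\,e_1(\theta,q)+(1-\alpha)\,e_2(\theta,q)
\]
together with $\alpha\in(0,1)$ implies that the event $\{e(\theta,q)=0\}$ is exactly $\{e_1(\theta,q)=0\}\cap\{e_2(\theta,q)=0\}$. This lets me rewrite the conditioning event in $\tilde I$ as simultaneous interpolation of the two component tasks, which is precisely the form needed to apply the stated conditional independencies.

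For the upper bound $\tilde I\le \tilde I_1+\tilde I_2$, I would use convexity: if $e_1(\theta,p)\le\varepsilon$ and $e_2(\theta,p)\le\varepsilon$, then $e(\theta,p)=\alpha e_1+(1-\alpha)e_2\le\varepsilon$. Hence
\[
\mathbb{P}(e(\theta,p)\le\varepsilon\mid e(\theta,q)=0)\ge\mathbb{P}(e_1(\theta,p)\le\varepsilon,\,e_2(\theta,p)\le\varepsilon\mid e_1(\theta,q)=0,\,e_2(\theta,q)=0).
\]
The third conditional independence factorizes this joint probability into the product of the two marginal conditional probabilities (each still conditioned on both interpolation events), and then the first two conditional independencies strip off the ``wrong'' interpolation event from each factor, leaving $e^{-\tilde I_1}\cdot e^{-\tilde I_2}$. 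Taking $-\log$ gives the desired upper bound.

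For the lower bound $-\log(e^{-\tilde I_1}+e^{-\tilde I_2})\le\tilde I$, I would argue by contrapositive on the convex combination: if both $e_1(\theta,p)>\varepsilon$ and $e_2(\theta,p)>\varepsilon$, then $e(\theta,p)>\varepsilon$, so
\[
\{e(\theta,p)\le\varepsilon\}\subseteq\{e_1(\theta,p)\le\varepsilon\}\cup\{e_2(\theta,p)\le\varepsilon\}.
\]
Applying the union bound under the conditioning $\{e(\theta,q)=0\}$ and again invoking the first two conditional independencies (each allowing me to drop the extraneous interpolation event) yields
\[
\mathbb{P}(e(\theta,p)\le\varepsilon\mid e(\theta,q)=0)\le e^{-\tilde I_1}+e^{-\tilde I_2},
\]
and taking $-\log$ reverses the inequality. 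The main obstacle I anticipate is purely bookkeeping: making sure each appeal to a conditional independence matches its stated form exactly, since the conditioning event $\{e(\theta,q)=0\}$ must first be expanded into $\{e_1(\theta,q)=0,e_2(\theta,q)=0\}$ before the independencies can be applied; beyond that, both directions reduce to the two elementary convex-combination facts above.
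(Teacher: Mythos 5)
Your proposal is correct and follows essentially the same route as the paper's proof: the upper bound comes from the inclusion $\{e_1(\theta,p)\le\varepsilon\}\cap\{e_2(\theta,p)\le\varepsilon\}\subseteq\{e(\theta,p)\le\varepsilon\}$ plus the third conditional independence to factorize and the first two to reduce each factor to its single-task conditioning, while the lower bound comes from the reverse inclusion into the union plus the union bound and the same two independencies. Your explicit remark that $\{e(\theta,q)=0\}=\{e_1(\theta,q)=0\}\cap\{e_2(\theta,q)=0\}$ (via nonnegativity of the losses and $\alpha\in(0,1)$) is a detail the paper leaves implicit, but otherwise the arguments coincide.
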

\begin{proof}
   First, that by conditional independence:
     \begin{multline}
       \mathbb{P}(e_1(\theta,p)\le\varepsilon\mid e(\theta, q) = 0) \mathbb{P}(e_2(\theta,p)\le\varepsilon\mid e(\theta, q) = 0)  \\= \mathbb{P}(e_1(\theta, p) \leq \varepsilon \land e_2(\theta, p) \leq \varepsilon \mid e(\theta, q) = 0)
   \end{multline}
   Note that $e(\theta, p) \leq \varepsilon \impliedby e_1(\theta, p) \leq \varepsilon \land e_2(\theta, p) \leq \varepsilon$. This implies:
   \begin{multline}
       \mathbb{P}(e(\theta,p)\le\varepsilon\mid e(\theta, q) = 0) \geq 
       \mathbb{P}(e_1(\theta, p) \leq \varepsilon \land e_2(\theta, p) \leq \varepsilon \mid e(\theta, q) = 0) \\ = 
       \mathbb{P}(e_1(\theta,p)\le\varepsilon\mid e(\theta, q) = 0) \mathbb{P}(e_2(\theta,p)\le\varepsilon\mid e(\theta, q) = 0)
   \end{multline}
   Also, observe that $e(\theta, p) \leq \varepsilon \implies e_1(\theta, p) \leq \varepsilon  \lor e_2(\theta, p) \leq \varepsilon$. This implies:
    \begin{equation}
       \mathbb{P}(e_1(\theta,p)\le\varepsilon\mid e(\theta, q) = 0) +  \mathbb{P}(e_2(\theta,p)\le\varepsilon\mid e(\theta, q) = 0)  \geq \mathbb{P}(e(\theta,p)\le\varepsilon\mid e(\theta, q) = 0)
   \end{equation}
   Next, using our first two conditional independence assumptions:
      \begin{equation}
       \mathbb{P}(e_1(\theta,p)\le\varepsilon\mid e_1(\theta, q) = 0) \mathbb{P}(e_2(\theta,p)\le\varepsilon\mid e_2(\theta, q) = 0)  \leq \mathbb{P}(e(\theta,p)\le\varepsilon\mid e(\theta, q) = 0)
   \end{equation}
       \begin{equation}
       \mathbb{P}(e_1(\theta,p)\le\varepsilon\mid e_1(\theta, q) = 0) +  \mathbb{P}(e_2(\theta,p)\le\varepsilon\mid e_2(\theta, q) = 0)  \geq \mathbb{P}(e(\theta,p)\le\varepsilon\mid e(\theta, q) = 0)
   \end{equation}
   Finally, taking the negative log of both sides:
   \begin{equation}
       -\log (e^{-\tilde I_1} + e^{-\tilde I_2})  \leq  \tilde I \leq \tilde I_1 + \tilde I_2
   \end{equation}
   as desired.
\end{proof}
Note that $-\log (e^{-\tilde I_1} + e^{-\tilde I_2}) \approx \min \{ \tilde I_1, \tilde I_2 \}$. Thus, the statement intuitively says that the combined task requires inductive bias up to the total inductive bias of the two tasks treated individually, and requires \textit{at least} the inductive bias of the easier task. We emphasize again that $\tilde I_i$ does \textit{not} correspond to the inductive bias required to solve task $i$, but rather the inductive bias required to solve a variant of task $i$ with a distractor added to instances provided from the other task.

\paragraph{Experiments} Next, we empirically compute inductive bias complexities for combinations of image classification tasks. To do this, we compute the task difficulty of a version of each task with a task-irrelevant distractor from the \textit{other} task appended to each instance ($\tilde I_1$ or $\tilde I_2$ as described above). We then report the upper bound $\tilde I_1 + \tilde I_2$ and lower bound $-\log (e^{-\tilde I_1} + e^{-\tilde I_2})$ of $\tilde I$.

In order to compute task difficulties for tasks with distractor-appended instances, we first revisit our generalized approximation of inductive bias complexity from Equation~\eqref{eqn:generalized_task_diff}:
\begin{equation}
\tilde{I} \approx (2d z E-nd) \left(\log b + \frac{1}{2}\log z + \log d + \log K - \frac{1}{m}\log n - \log\frac{\varepsilon}{L_\mathcal{L}} + \log c\right).
\end{equation}
where $c=\sqrt{\frac{m}{6}}\left(\frac{2\pi^{(m+1)/2}}{\Gamma\left(\frac{m+1}{2}\right)}\right)^{1/m}$, which is roughly constant for large intrinsic dimensionality $m$. Recall that $d$ is the output dimensionality, $z$ is the number of manifolds that the input data lie on, $n$ is the number of training data points, $\varepsilon$ is the target error, $L_\mathcal{L}$ is the Lipschitz constant of the loss function, and $K$ and $E$ are functions of $m$ and the maximum frequency $M=2\pi r/\delta$. Typically for classification problems, $z=d$ is set as the number of classes.

When we append task-irrelevant distractors to the instances of a task, five key parameters change: the number of training points $n$, the intrinsic dimensionality $m$, the number of manifolds $z$ and the bound on model output $b$ and bound on model input $r$. Importantly, all other parameters stay fixed (namely, $\delta$, $d$, $\varepsilon$ and $L_\mathcal{L}$). The number of training points scales with the number of different distractor instances; if task 1 with $n_1$ training points is augmented with instances from task 2 with $n_2$ training points, the new task has $n_1 n_2$ training points. Also, the new intrinsic dimensionality of the task is the \textit{sum} of the intrinsic dimensionalities of the individual tasks since the combined manifold of $(x_1, x_2)$ is the \textit{product} of the manifolds of $x_1$ and $x_2$ individually. Thus, if task 1 has intrinsic dimensionality $m_1$ and task 2 has intrinsic dimensionality $m_2$, the intrinsic dimensionality of the augmented task is $m_1 + m_2$. The number of manifolds of the input of the combined task is the number of manifolds corresponding to $(x_1, x_2)$, which is simply $z_1 z_2$ if task 1 and task 2 have $z_1$ and $z_2$ classes respectively. If task 1 and task 2 have model outputs bounded by $b_1$ and $b_2$ respectively, then the combined model is bounded by $\sqrt{b_1^2 + b_2^2}$. Similarly, the new value of $r$ is $\sqrt{r_1^2 + r_2^2}$. We may then compute the task difficulty of distractor-appended task using the formula above.

We compute task difficulties for pairwise combinations of MNIST, SVHN and CIFAR-10 (which have individual task difficulties (in bits) of: $1 \times 10^{16}$, $1 \times 10^{31}$, $3 \times 10^{32}$). As found in Table~\ref{tab:combined}, combined task difficulties are significantly greater than the difficulties of individual tasks. As a very rough rule of thumb, the combined task difficulty is approximately the product of the individual task difficulties. This makes sense since the task difficulty scales exponentially with the intrinsic dimension of the data and combining together two tasks *adds* together the intrinsic dimensionality of the two tasks (and thus multiplies their task difficulties).

\begin{table}[htbp]
  \centering
  \caption{Combined task difficulty bounds (in bits) for combinations of image classification tasks. Difficulties are reported as "lower bound / upper bound." Individual task difficulties of MNIST, SVHN and CIFAR-10 are (in bits): $1 \times 10^{16}$, $1 \times 10^{31}$, $3 \times 10^{32}$ respectively.} \label{tab:combined}
    \begin{tabular}{llll}
    \toprule
          & MNIST & SVHN  & CIFAR-10 \\
    \midrule
    MNIST & $1 \times 10^{26}$ / $3 \times 10^{26}$ & $6 \times 10^{39}$ / $5 \times 10^{45}$ & $3 \times 10^{42}$ / $8 \times 10^{44}$ \\
    \midrule
    SVHN  & -     & $4 \times 10^{54}$ / $8 \times 10^{54}$ & $4 \times 10^{50}$ / $4 \times 10^{61}$ \\
    \midrule
    CIFAR-10 & -     & -     & $2 \times 10^{55}$ / $3 \times 10^{55}$
    \\
    \bottomrule
    \end{tabular}%
  \label{tab:addlabel}%
\end{table}%

Why does combining together two simple tasks like MNIST result in a much more difficult task? This is because separating the two parts of the combined task requires a lot of inductive bias. Without this inductive bias, solving a combination of two tasks looks like solving a single task with a higher dimensional input, which correspondingly requires much greater inductive bias. This also suggests how practical model classes like neural networks may be able to provide such vast amounts of inductive bias to a task: namely, by breaking down the input into lower-dimensional components.

\subsection{Predicting Zero-output}
To gain intuition for the properties of task difficulty, we compute task difficulties for a task in which the target function always has output $0$: $f^*(x)$. To compute the task difficulty, we return to the definition of inductive bias complexity from Definition~\ref{inductive_bias_definition}:
\begin{equation}
        \tilde{I} = -\log \mathbb{P}(e(\theta,p)\le\varepsilon\mid e(\theta,q) \le \epsilon)
\end{equation}
Keeping $\epsilon = 0$ as we have done throughout the paper, observe that just as with the general case, the inductive bias complexity is based on the fraction of interpolating hypotheses that generalize well. In the case of zero output, generalizing well means that $f(\cdot; \theta)$ should be sufficiently close to $0$ on the distribution $p$. Note that interpolating hypotheses may not necessarily have near-zero output over distribution $p$; thus, solving the a zero-output task may require significant levels of inductive bias.

The key to determining the difficulty of a zero-output task is setting the hypothesis space. Recall that in Section~\ref{approx_bound}, we constructed the hypothesis space to be a linear combinations of a set of basis functions sensitive to a \textit{task-relevant} resolution (or larger). In the case of zero-output, all resolutions are \textit{task-irrelevant}: the true output $f^*(x)$ is not sensitive to any changes in the input. Thus, if we were to use the approach of Section~\ref{approx_bound} to construct hypotheses, we would find that there is only a single hypothesis in the hypothesis space, namely $f*$. Since $f^*$ interpolates the training data and perfectly generalizes, the probability of generalizing given interpolation is $1$. Thus, the task difficulty is $0$.

However, other choices of the base hypothesis space may lead to very large task difficulties. Consider a version of ImageNet in which all inputs are mapped to a 1000-dimensional zero vector. If we construct the base hypothesis space in the same way as for the original ImageNet task, assuming we wish to achieve the same generalization error target as we set for the original ImageNet task, the task difficulty approximation for the zero-output ImageNet would be the \textit{same} as for the original ImageNet: $3.55 \times 10^{41}$ bits. 

How can the inductive bias required to generalize on ImageNet stay the same even when the target function is made much simpler (assuming the hypothesis space is kept the same)? This is because inductive bias complexity corresponds to the difficulty of specifying a generalizing set of hypotheses from the interpolating hypotheses. Due to the large size of the base hypothesis space, the size of the interpolating hypothesis space can be expected to be the same for both the true ImageNet target function and the zero target function: we may expect a the same fraction of hypothesis to satisfy the zero target training samples as the true ImageNet training samples. The base hypothesis space does not over-represent hypotheses near the zero target function relative to other hypotheses. Then, given similarly sized interpolating hypothesis spaces, the difficulty of specifying the true hypothesis is similar for the two tasks.

\subsection{Predicting Random Targets}
Next, we consider a variation of ImageNet where the ImageNet target function is replaced with a random function selected from the base hypothesis space used for the original ImageNet. In this case, the training set would appear to be a version of ImageNet with randomized outputs for each image. Importantly, note that this is different than independently selecting a random output for each point in the training set: we would training points that are sufficiently close (with distances around the spatial resolution $\delta$ or smaller) to have similar outputs. We consider this formulation of randomizing the outputs of ImageNet instead of selecting independent random targets for each ImageNet input since constructing independently chosen random targets for all possible input of ImageNet may not correspond to a well-defined target function.

Assume we wish to achieve the same generalization error rate as for the original ImageNet task. The parameters governing task difficulty would remain the same as in Equation~\ref{final}: namely, the parameters specific to the input distribution ($m, n, d, r$), model output and loss function ($b, L_\mathcal{L}$) and error rate ($\varepsilon$) would remain the same. Thus, the task difficulty would be the \textit{same} as for the original ImageNet: $3.55 \times 10^{41}$ bits.

How can solving a randomized version of ImageNet require no more inductive bias than the original task despite having a less structured target function? Intuitively, this is because the base hypothesis space considers all hypothesis equally: it does not over-represent structured hypotheses (such as the target function of ImageNet) relative to other ones. Thus, the difficulty of specifying a well generalizing region in the hypothesis space is the same for both the randomized and original versions of ImageNet.

\begin{figure}
    \centering
    \begin{subfigure}{.49\textwidth}
      \centering
      \includegraphics[width=\linewidth]{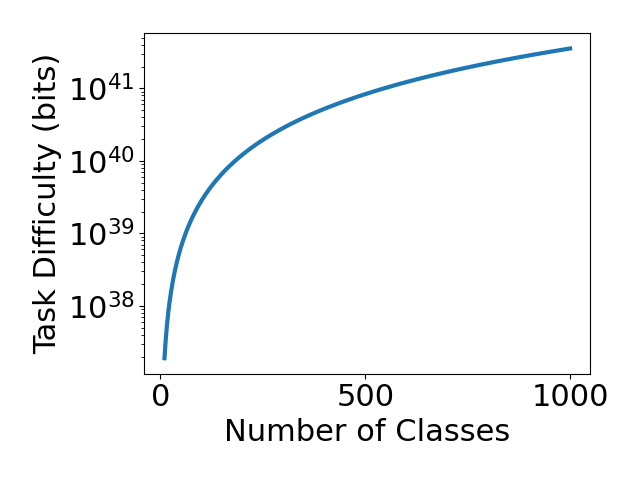}
      \caption{Varying number of classes\\ on ImageNet}
      \label{fig:vary_n_classes}
    \end{subfigure}
    \begin{subfigure}{.49\textwidth}
      \centering
      \includegraphics[width=\linewidth]{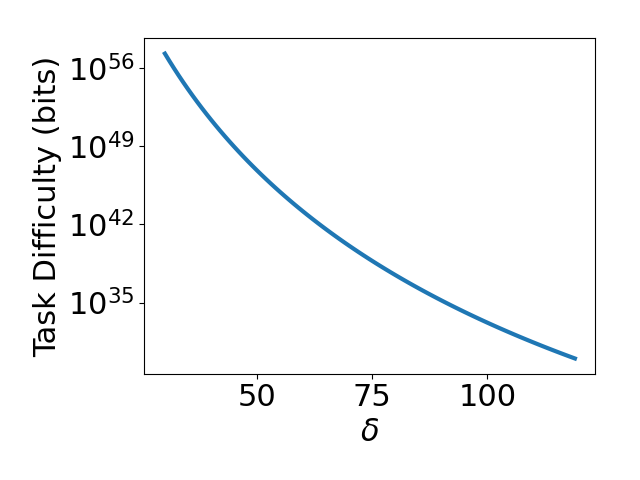}
      \caption{Varying spatial resolution \\ on ImageNet}
      \label{fig:vary_delta}
    \end{subfigure}%
    \caption{Task difficulties on parametric variations of benchmark tasks.
    }
    \label{fig:vary_more_params}
\end{figure}

\subsection{Varying Number of Classes}
We vary the number of classes in ImageNet (while keeping all other task parameters fixed including the number of training points $n$) and find in Figure~\ref{fig:vary_n_classes} that task difficulty grows with the number of classes. As the number of classes increased from $10$ to $1000$, the task difficulty increases by roughly $5$ orders of magnitude. This is primarily due to the increased size of the input space when more classes are added. The results indicate that adding more classes to a classification task can be a moderately powerful way of increasing the difficulty of a task, although not as powerful as increasing the intrinsic dimensionality of a task.

\subsection{Varying Spatial Resolution} \label{app:vary_res}
We vary the spatial resolution used to construct the base hypothesis space of ImageNet (while keeping all other task parameters fixed) and find in Figure~\ref{fig:vary_delta} that task difficulty drastically shrinks with the spatial resolution. This makes sense: as the spatial resolution used to construct the hypothesis space grows, the hypothesis space shrinks, and it becomes much more difficult to specify regions in the hypothesis space. These results emphasize the critical effect of the selection of base hypothesis space on task difficulty.

\section{Additional Discussion} \label{sec:more_discussion}
\subsection{How to extend the inductive bias complexity measure to the non-interpolating case?}
One limitation of our work is that we only consider interpolating hypotheses (in other words, we only consider training error $\epsilon=0$ in Definition~\ref{inductive_bias_definition}). Although we will leave a formal extension of our results to non-interpolating hypotheses as a future work, we briefly outline how are results might be extended:

First, we may provide an analogous result to Theorem 1 in the non-interpolating case by arguing that non-interpolating hypotheses may generalize well as long as they are within a smaller radius of the true hypotheses (relative to the radius for interpolating hypotheses). Specifically, we may expect the more general expression for the radius to be $\frac{\varepsilon - \epsilon}{L_\mathcal{L}L_fW(p,q)}$. 

Next, in order to quantify probabilities in the hypothesis space and find a practical estimate for inductive bias complexity, we must quantify the size of the hypothesis space that fits the training data up to error $\epsilon$. Intuitively, we may expect it to be a region around the interpolating hypothesis space with dimensionality equal to that of the base hypothesis space. We may expect the size of this region to scale polynomially with $\epsilon$. As with the interpolating case, we can then estimate the fraction of the near-interpolating hypothesis space that is close enough to the true hypothesis to find a generalized, practically-computable expression for task difficulty.

\subsection{Why does inductive bias exponentially depend on data dimension?}
Intuitively, inductive bias scales exponentially with data dimension because inductive bias scales with the dimensionality of the hypothesis space, and the dimensionality of the hypothesis space scales exponentially with data dimension.

First, we consider the intuition for why inductive bias scales with the dimensionality of the hypothesis space: recall that generalizing requires specifying a specific set of well-generalizing hypotheses in the hypothesis space. The amount of information needed to specify a point in an $m$ dimensional space scales with $m$ since it is simply the number of coordinates of the point. Thus, assuming well-generalizing hypotheses are concentrated in a region around a point, the amount of information required to specify the region also scales with the dimensionality of the hypothesis space.

Next, we consider the intuition for why the dimensionality of the hypothesis space scales with data dimensionality: consider a very simple "grid-based" method of constructing hypotheses in which the input manifold is divided into equally sized hypercubes that tile the entire manifold. A hypothesis consists of a mapping between hypercubes and outputs. Note that the dimensionality of each hypothesis scales with the number of hypercubes since each hypothesis can be specified by its output at each hypercube. The number of hypercubes scales exponentially with the dimensionality of the input manifold; thus the hypothesis space dimensionality also scales exponentially with the dimensionality of the input manifold.

We view this exponential dependence as fundamental: generalizing over more dimensions of variation significantly expands the hypothesis space regardless of how we parameterize hypotheses, and thus dramatically increases inductive bias complexity.

\subsection{Why do training points provide so little inductive bias?}
Experimentally, we observe that changing the amount of training data, even by many orders of magnitude, does not significantly affect the amount of inductive bias required to generalize. We can interpret this as training data providing relatively little information on which to generalize. Why do training points provide so little information? Intuitively, it is because in the absence of strong constraints on the hypothesis space, training points only provide information about the function we aim to approximate in a \textit{local} neighborhood around each training point. By contrast, inductive biases can provide more \textit{global} constraints on the hypothesis space than can dramatically reduce the size of the hypothesis space and allow for generalization. Without strong inductive biases, training points need to cover large regions of the input manifold to allow for generalization. With high dimensional manifolds, this can require very large numbers of training points.

Through simple scaling arguments, we can estimate the number of training samples we need to generalize on ImageNet in the absence of strong inductive biases. We estimate the intrinsic dimensionality of ImageNet to be $48$ and the task-relevant resolution $\delta$ of ImageNet to be $65$. We can then estimate the number of training points needed to generalize as the number such that any region on the manifold is within radius $\delta$ of a training point. With $n$ training points, we can expect to cover a volume of about $n 65^{48}$ on the manifold. Given a manifold of volume about $(255 \times 224 \times \sqrt{3})^{48}$ (corresponding to the $[0, 255]$ pixel range and $224\times224\times3$ extrinsic dimensionality of ImageNet images), we would then require about $(255 \times 224 \times \sqrt{3}/65)^{48} \approx 10^{153}$ points to generalize. Indeed, following the linear trend of Figure~\ref{fig:vary_params}, we may expect to approximately halve the required inductive bias with this number of training points. However, with only $10^{17}$ points, strong inductive biases would be necessary to generalize as indicated in Figure~\ref{fig:vary_params}.

\subsection{Why is the scale of inductive bias complexity so large?}
Across all settings, our measure yields very large inductive information content, many orders of magnitude larger than parameter dimensionalities of practical models. These large numbers can be attributed to the vast size of the hypothesis space constructed in Section~\ref{parameterizing_hypotheses}: it includes \textit{any} bandlimited function on the data manifold below a certain frequency threshold. Typical function classes may already represent only a very small subspace of the hypothesis space: for instance, neural networks have biases toward compositionality and smoothness that may significantly reduce the hypothesis space (see~\citet{li2018visualizing, mhaskar2017when}), with standard initializations and optimizers further shrinking the subspace. Moreover, functions that can be practically implemented by reasonably sized programs on our computer hardware and software may themselves occupy a small fraction of the full hypothesis space. Future work may use a base hypothesis space that already includes some of these constraints, which could reduce the scale of our measured task difficulty.

\clearpage
\section{Additional Tables and Figures}

\begin{table}[h!] 
\centering
\caption{Mapping different learning settings under a common set of notation.} \label{tab:mapping}
\adjustbox{max width=\textwidth}{
\begin{tabular}{llll}
\toprule
{\bf Setting}                                              & $x$ & {\bf Features} & $f^*(x)$ \\ \midrule
{\bf Supervised classification}                            &       input      &  $d_1:$ class  & class                     \\
            &         &  $c_1:$ instance within class  &                      \\ \\
{\bf Reinforcement learning}                               & state or observation   &    $c_1:$ state or       &     desired action                    \\
     & sequence   &   observation sequence      &                       \\ \\
{\bf Meta-learning for}  & samples from a set & $c_1$: a set of related classes           & classifier mapping inputs                      \\
 {\bf few-shot classification} & of related classes  & $c_2$: choice of samples from $c_1$            &  to their class                       \\ \\
{\bf Unsupervised autoencoding}                            & input   &     $c_1$: input     &         $x$                \\
                            &   &          &              \\ \\
{\bf Meta-reinforcement learning}                          & trajectories through  &   $c_1$: environment       &   policy mapping states        \\
   & environments  &   $c_2$: trajectory within environment       &   to desired actions         \\\bottomrule
\end{tabular}
}
\end{table}

\begin{figure}[h!]
    \centering
    \includegraphics[width=0.95\textwidth]{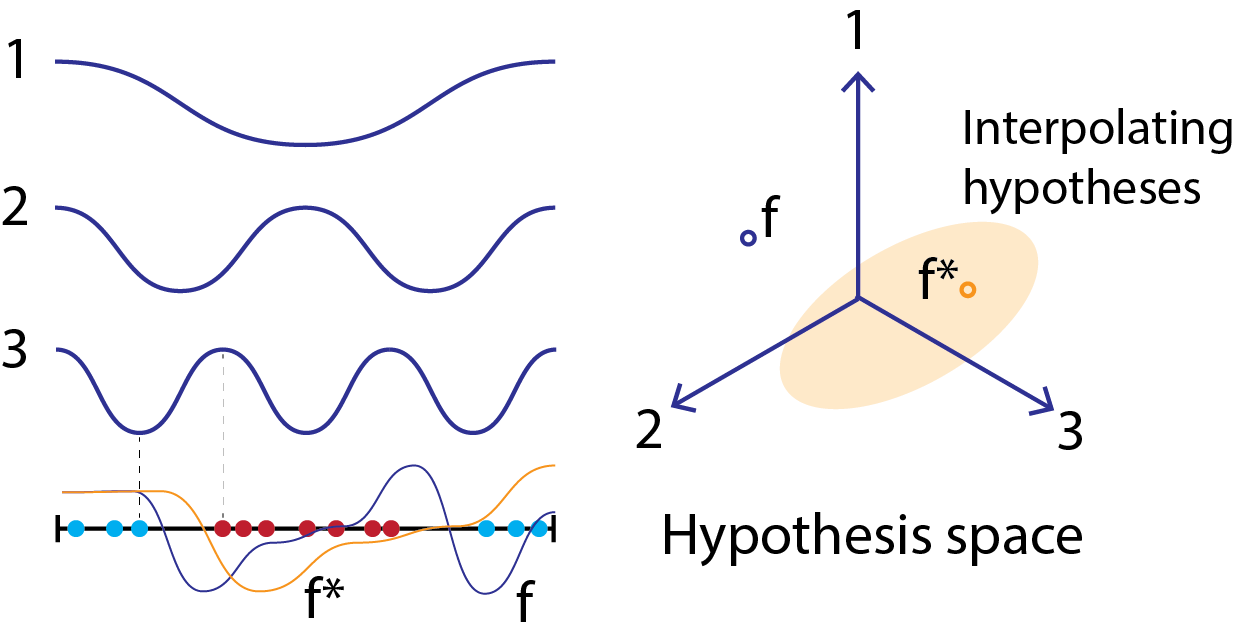}
    \caption{An example of constructing a hypothesis space for a binary-classification task with a 1 dimensional input. The black line indicates the data manifold and the blue and red dots on the line indicate training points. The hypothesis space is constructed using basis functions of three different frequencies; observe that the highest frequency is chosen to have scale corresponding to the minimum distance between classes. Two specific hypothesis are illustrated in the hypothesis space, the true hypothesis (in orange), $f^*$ and another hypothesis $f$ (in purple). Both can be expressed as linear combinations of the basis functions, and thus correspond to points in the hypothesis space as illustrated. The $f^*$ hypothesis fits the training data, and thus is part of the interpolating  hypothesis set indicated by the light orange oval.}
    \label{fig:constructing_hypotheses}
\end{figure}

\begin{table}[!ht]
    \centering
    \caption{The inductive bias information contributed by different model architectures for image classification tasks. (FC-$N$ refers to a fully connected network with $N$ layers.)}
    \adjustbox{width=\textwidth}{
    \begin{tabular}{p{0.35\textwidth}|p{0.20\textwidth}}
        \toprule
        \multicolumn{2}{c}{\textbf{MNIST}} \\ \hline
        Model & Information Content ({\small$\times 10^{16}$ bits}) \\ \hhline{=|=}
        Linear \citep{lecun1998mnist} & $0.705$ \\ \hline
        FC-3 \citep{lecun1998mnist} & $0.817$ \\ \hline
        AlexNet \citep{mrgrhn2021alexnet} & $0.888$ \\ \hline
        LeNet-5 (CNN) \citep{lecun1998mnist} & $0.907$ \\ \hline
        DSN \citep{lee2015deeply} & $0.976$ \\ \hline
        MCDNN \citep{ciregan2012multi} & $1.020$ \\ \hline
        Ensembled CNN \citep{an2020ensemble} & $1.094$ \\
        \bottomrule
    \end{tabular}
    \hspace{0.05\textwidth}
    \begin{tabular}{p{0.45\textwidth}|p{0.20\textwidth}}
        \toprule
        \multicolumn{2}{c}{\textbf{SVHN}} \\ \hline
        Model & Information Content ({\small$\times 10^{31}$ bits}) \\ \hhline{=|=}
        FC-6 \citep{mauch2017prune} & $0.870$ \\ \hline
        AlexNet \citep{veeramacheneni2022canonical} & $0.985$ \\ \hline
        Deep CNN \citep{goodfellow2013multi} & $1.049$ \\ \hline
        DSN \citep{lee2015deeply} & $1.060$ \\ \hline
        DenseNet \citep{huang2017densely} & $1.075$ \\ \hline
        WRN-16-8 \citep{zagoruyko2016wide} & $1.077$ \\ \hline
        WRN-28-10 \citep{foret2020sharpness} & $1.114$ \\ \bottomrule
    \end{tabular}
    }
    \newline
    \vspace{0.1cm}
    \newline
    \adjustbox{width=\textwidth}{
    \begin{tabular}{p{0.35\textwidth}|p{0.20\textwidth}}
        \toprule
        \multicolumn{2}{c}{\textbf{CIFAR10}} \\ \hline
        Model & Information Content ({\small$\times 10^{32}$ bits})\\ \hhline{=|=}
        Linear \citep{nishimoto2018linear} & $2.250$ \\ \hline
        FC-4 \citep{lin2015far} & $2.354$ \\ \hline
        MCDNN \citep{ciregan2012multi} & $2.704$ \\ \hline
        AlexNet \citep{krizhevsky2012imagenet} & $2.709$ \\ \hline
        DSN \citep{lee2015deeply} & $2.786$ \\ \hline
        DenseNet \citep{huang2017densely} & $3.010$ \\ \hline
        ResNet-50 \citep{wightman2021timm} & $3.195$ \\ \hline
        BiT-L \citep{kolesnikov2020bit} & $3.453$ \\ \hline
        ViT-H/14 \citep{dosovitskiy2021image} & $3.513$ \\ \bottomrule
    \end{tabular}
    \hspace{0.03\textwidth}
    \begin{tabular}{p{0.45\textwidth}|p{0.20\textwidth}}
        \toprule
        \multicolumn{2}{c}{\textbf{ImageNet}} \\ \hline
        Model & Information Content ({\small$\times 10^{41}$ bits}) \\ \hhline{=|=}
        Linear \citep{karpathy2015breaking} & $2.063$\\ \hline
        SIFT + FVs \citep{sanchez2011high} & $2.261$ \\ \hline
        AlexNet \citep{krizhevsky2012imagenet} & $2.290$ \\ \hline
        DenseNet-121 \citep{huang2017densely} & $2.348$ \\ \hline
        ResNet-50 \citep{he2016resnet} & $2.362$ \\ \hline
        DenseNet-201 \citep{huang2017densely} & $2.363$ \\ \hline
        \small{WRN-50-2-bottleneck} \citep{zagoruyko2016wide} & $2.368$ \\ \hline
        BiT-L \citep{kolesnikov2020bit} & $2.449$ \\ \hline
        ViT-H/14 \citep{dosovitskiy2021image} & $2.461$ \\ \hline
    \end{tabular}
    }
    \newline
    \vspace{-7mm}
    \label{tab:arch_full}
\end{table}

%%%%%%%%%%%%%%%%%%%%%%%%%%%%%%%%%%%%%%%%%%%%%%%%%%%%%%%%%%%%%%%%%%%%%%%%%%%%%%%
%%%%%%%%%%%%%%%%%%%%%%%%%%%%%%%%%%%%%%%%%%%%%%%%%%%%%%%%%%%%%%%%%%%%%%%%%%%%%%%

\end{document}